\theoremstyle{definition}
\newtheorem{definition}{Definition}
\newtheorem{assumption}{Assumption}
\theoremstyle{plain}
\newtheorem{theorem}{Theorem}
\title{From Grunts to Lexicons: \\Emergent Language from Cooperative Foraging}
\author{%
    Maytus Piriyajitakonkij$^{1,2,3}$ \qquad
    Rujikorn Charakorn$^{5}$ \qquad
    Weicheng Tao$^{2}$ \\
    \textbf{Wei Pan}$^{3}$ \qquad
    \textbf{Mingfei Sun}$^{3}$ \qquad
    \textbf{Cheston Tan}$^{1,4}$ \qquad
    \textbf{Mengmi Zhang}$^{1,2,4}$ \\
    \ \\
    $^{1}$Institute for Infocomm Research (I$^{\text{2}}$R), A$^{*}$STAR, Singapore \\
    $^{2}$College of Computing and Data Science, Nanyang Technological University, Singapore \\
    $^{3}$Department of Computer Science, The University of Manchester, United Kingdom \\
    $^{4}$Centre for Frontier AI Research (CFAR), A$^{*}$STAR, Singapore \\
    $^{5}$Sakana AI, Japan \\
    Address correspondence to \texttt{mengmi.zhang@ntu.edu.sg} \\
}
\begin{document}

\maketitle

\begin{abstract}

%\revise{M2Dx.2 The introduction should be rewritten to align with established anthropological evidence.} 
Language is a powerful communicative and cognitive tool. It enables humans to express thoughts, share intentions, and reason about complex phenomena. Despite our fluency in using and understanding language, the question of how it arises and evolves over time remains unsolved. A leading hypothesis in linguistics and anthropology posits that language evolved to meet the ecological and social demands of early human cooperation. Language did not arise in isolation, but through shared survival goals.
Inspired by this view, we investigate the emergence of language in multi-agent Foraging Games. These environments are designed to reflect the cognitive and ecological constraints believed to have influenced the evolution of communication.
Agents operate in a shared grid world with only partial knowledge about other agents and the environment, and must coordinate to complete games like picking up high-value targets or executing temporally ordered actions. Using end-to-end deep reinforcement learning, agents learn both actions and communication strategies from scratch.
We find that agents develop communication protocols with hallmark features of natural language: arbitrariness, interchangeability, displacement, cultural transmission, and compositionality. We quantify each property and analyze how different factors, such as population size, social dynamics, and temporal dependencies, shape specific aspects of the emergent language.
Our framework serves as a platform for studying how language can evolve from partial observability, temporal reasoning, and cooperative goals in embodied multi-agent settings. We will release all data, code, and models publicly.

\end{abstract}

\vspace{-2mm}
\section{Introduction}
\vspace{-2mm}

%\revise{M2Dx.2 The introduction should be rewritten to align with established anthropological evidence.} %\revise{M2Dx.3: Add might} 
The evolution of human language remains a central open question in science. While humans fluently use and understand language, its origins are still debated. A prominent hypothesis suggests that language emerged through cooperative interaction under partial observability \cite{sterelny2012language, tomasello2010origins, nowak2000evolution, christiansen2003language, nowak2001towards}. Rather than an abstract code, language is viewed as a tool shaped by social use and shared goals \cite{wittgenstein2009philosophical, wagner2003progress}. Although direct evidence from early stages is lacking, multi-agent simulations provide a means to investigate how language might arise from the need to coordinate with others \cite{cangelosi2012simulating, kirby2014iterated, lazaridou2017multi, lazaridou2018emergence}.

%How human language evolved is a fundamental question in science. While humans fluently use and deeply understand language, the question of its origin remains open. A leading hypothesis states that human language evolved not in isolation, but through cooperative interaction under partial observability \cite{sterelny2012language, tomasello2010origins, nowak2000evolution, christiansen2003language, nowak2001towards}. Language is not merely an abstract code but a tool shaped by social use and shared goals \cite{wittgenstein2009philosophical, wagner2003progress}. Although direct evidence from these early stages is unavailable, multi-agent simulations offer a way to study 
%how language might emerge from the need to coordinate with others \cite{cangelosi2012simulating, kirby2014iterated, lazaridou2017multi, lazaridou2018emergence}. 

Prior work on Emergent Communication (EC) has primarily focused on referential games (RG), where a speaker conveys task-relevant information to a listener over a limited communication channel \cite{lazaridou2017multi, lewis2008convention, kharitonov2019egg, gualdoni2024bridging}. These settings have advanced our understanding of symbol grounding and compositionality, but often impose simplifying assumptions: agents are limited to unidirectional communication \cite{galke2022emergent}, operate in fixed speaker-listener roles, and perform disembodied tasks with passive input processing and no active interaction with the environment. Even recent extensions to population-based learning \cite{dubova2020effects, dubova2020reinforcement, kim2021emergent, chaabouni2022emergent, michelrevisiting} typically decouple communication from physical behavior. Such setups diverge from the ecologically grounded, socially interdependent, and embodied conditions under which human language likely evolved \cite{tomasello2010origins, dessalles2007we}.

To bridge this gap, we introduce Foraging Games (FG), a multi-agent framework designed to reflect more realistic ecological and cognitive constraints that may have shaped early human language \cite{dessalles2007we, tomasello2010origins}. FG supports bidirectional communication. Agents are trained to jointly learn both physical action and communication, as early humans likely did during cooperative foraging. The environment enforces embodiment: agents must explore, observe, and act within a dynamic and partially observable world that they can influence. Moreover, each agent interacts with a population of diverse partners, facilitating the study of generalization to new partners, dialect formation, and cultural transmission \cite{kim2021emergent, rita2022role, michelrevisiting}.

Specifically, we formulate FG as a partially observable grid world in which agents must complete multi-step games as shown in \autoref{fig:task}. Success depends not only on taking effective actions, but also on communicating what they see and know. It includes two games: (1) collecting the most valuable item with a partner, which encourages communication about items' scores and locations, and (2) picking up items in a specific order with a partner, which requires communication about when items were seen. Each agent has deep neural network modules for perception, memory, and policies to translate internal representations into actions and messages. Agents communicate through discrete messages drawn from a finite set of learnable vocabulary exchanged at each time step. They are trained independently using Proximal Policy Optimization (PPO) \cite{schulman2017proximal}, with no shared parameters, or gradients, reflecting the personalized and decentralized nature among agents. 

\begin{figure}[t]
    \centering
    \begin{subfigure}{0.28\textwidth}
        \centering
        \includegraphics[trim=0 20 0 0, clip, width=\textwidth]{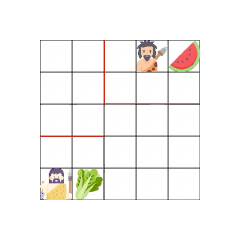}
        \caption{Foraging Games}
        \label{fig:intro_a}
    \end{subfigure}
    \begin{subfigure}{0.26\textwidth}
        \centering
        \includegraphics[trim=0 20 0 0, clip, width=\textwidth]{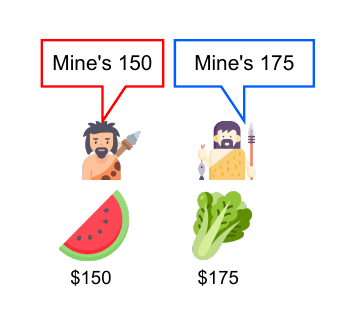}
        \caption{\emph{ScoreG}}
        \label{fig:intro_b}
    \end{subfigure}
    \begin{subfigure}{0.26\textwidth}
        \centering
        \includegraphics[trim=0 20 0 0, clip, width=\textwidth]{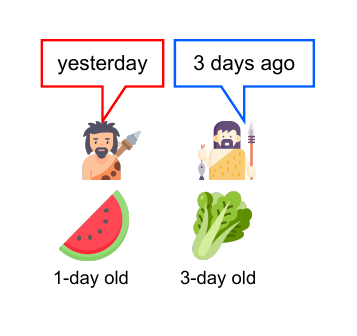}
        \caption{\emph{TemporalG}}
        \label{fig:intro_c}
    \end{subfigure}
    \vspace{-2mm}
    \caption{
    \textbf{An Overview of Foraging Games.} (a) Two agents operate in a $5\times5$ partially observable grid world. Each agent can observe a $3\times3$ grid centered on itself. The two agents are required to pick up the goal items simultaneously. (b) \emph{ScoreG} game: Two items are assigned random scores; each agent observes only one score, and both must pick up the higher-scoring item to succeed in an episode. This game is designed to encourage agents to communicate about items' scores. (c) \emph{TemporalG} game: Two items spawn at random times, each observed by only one agent. Agents must collect them in chronological order, encouraging communication about time.
    }

    \label{fig:task}
    \vspace{-7mm}
\end{figure}

%\revise{M2Dx.1 and vika.4 The research question and hypotheses need to be stated more clearly} 
With this framework in place, we aim to provide answers to the following questions: \textbf{(Q1)} What properties does emergent communication have when it arises in embodied, interactive, and collaborative agents?
\textbf{(Q2)} Do agents truly understand the language they use, and what drives the emergence of a shared language when the same agent acts as both speaker and listener?
\textbf{(Q3)} How do population size and social dynamics affect the emergent language?
\textbf{(Q4)} Can agents refer to spatial and temporal events through their language?
\textbf{(Q5)} Can agents communicate implicitly through non-verbal behaviors, such as body movements?
%\MM{I added all five questions}

Empirically, we find that agents achieve a success rate (SR) above $95\%$ across all games. Although two trained agents perform well when paired with each other, each fails when paired with a copy of itself at test time, suggesting that an agent understands only its partner's language, not the one it produces. We propose two solutions to this problem. The first is to train a population with more than two agents. We hypothesize that developing a shared language becomes the optimal strategy when agents must communicate with multiple partners. The second is to incorporate self-interaction during training \cite{dubova2020effects, charakorn2023generating, charakorn2024diversity}, motivated by the observation that humans can speak to themselves. These solutions encourage agents to comprehend their own messages, promoting convergence on a shared common language and reflecting the property of interchangeability. Since human language develops in populations, we further explore language properties in groups of agents connected via different social structures: fully-connected, ring-structured, and small-world-structured networks. We decode task-relevant information, such as items' positions, scores, and spawn times, from the agents’ messages using logistic regression, verifying that the messages are meaningful rather than random. Above-chance decoding accuracy indicates that a language has emerged to communicate item properties. Specifically, agents develop time adverbials \cite{lipinski2023s} that refer to when past events occurred, along with messages indicating the location of those events. Finally, we show that agents can develop implicit communication \cite{wang2025scout, grupen2022multi, dreyer2025comparing}, i.e., gaining information by observing a partner's behavior, when they are unable to send messages.

Our novel contributions are as follows: 

\noindent \textbf{1.} We introduce Foraging Games, a framework for studying emergent communication that provides ecological and cognitive constraints resembling those faced by early humans, including embodiment, behavior coordination, partial observability, bidirectional communication, and temporal reasoning.

\noindent \textbf{2.} We propose a hybrid cross-and-self-play training regime that enhances cultural transmission in weakly-connected social networks: agents that are closer in the training population develop more similar languages than those farther apart.

\noindent \textbf{3.} We design a series of tasks in FG that elicit both temporal and spatial displacement in emergent language, which could not arise without embodiment.

\vspace{-2mm}
\section{Related Work}
\vspace{-2mm}
EC offers insight into the evolution of human language \cite{lazaridou2017multi, lazaridou2018emergence, kirby2014iterated, chaabouni2022emergent, cangelosi1998emergence, cangelosi2001evolution, boldtreview, peters2025emergent, lazaridou2020emergent} and the development of more effective representations \cite{mu2021emergent, carmelictd, zionsemantics, denamganai2024erelela, yao2022linking}. Early studies employed evolutionary computation \cite{cangelosi1998emergence, cangelosi2001evolution} or Bayesian modeling \cite{nowak2000evolution, kirby2007innateness} to simulate language emergence and its dynamics. The advent of deep reinforcement learning (DRL) \cite{mnih2013playing, mnih2015human} introduced a more powerful framework for studying EC in realistic, complex settings \cite{foerster2016learning}. Contemporary work often applies DRL to Lewis-style referential games (RG) \cite{lewis2008convention}, where a speaker conveys task-relevant information to a listener through a limited communication channel, typically for target identification or input reconstruction \cite{zionsemantics, gualdoni2024bridging, lipinski2024speaking, kharitonov2019egg}. Because language is inherently social, research has shifted toward population-level studies \cite{dubova2020effects, dubova2020reinforcement, kim2021emergent, chaabouni2022emergent, rita2022role, michelrevisiting}, examining how social dynamics and population training influence consistency \cite{dubova2020effects, dubova2020reinforcement} and dialect formation \cite{kim2021emergent}. However, many resulting frameworks resemble autoencoders \cite{kingma2013auto, higgins2017beta, uedalewis}, treating language as a compressed latent code rather than a flexible medium for real-world coordination.

Recent studies have examined more realistic conditions, such as bidirectional exchange, where agents both send and receive messages \cite{evtimova2018emergent, dubova2020effects, dubova2020reinforcement, taillandier2023neural, graesser2019emergent, kottur2017natural, nikolaus2023emergent}, and embodiment \cite{jain2019two, patel2021interpretation, mordatch2018emergence, kajic2020learning}, where agents learn to communicate through interaction with their environment and partners. Only two studies \cite{mordatch2018emergence, jain2019two} incorporate both. Most embodied EC work \cite{jain2019two, patel2021interpretation} prioritizes task performance over analyzing EC itself and omits biologically plausible decentralized training. Another line of research uses EC as a control prompt, outperforming natural language in embodied tasks \cite{mu2023ec2}, though it is still learned through disembodied RG. The closest work to ours is \cite{mordatch2018emergence}, which investigates EC in a physical setting but with all agents sharing a single network, lacking coordinated action toward a shared goal. It also overlooks population training, social dynamics, and key properties such as compositionality, interchangeability, and displacement, while relying on a differentiable environment with direct gradient flow. By contrast, our approach is fully decentralized and reward-driven, with communication and coordination emerging solely from reward, reflecting constraints faced by early humans. We further demonstrate that simple FG yields core linguistic phenomena—most notably displacement, grounded in embodiment.

\begin{figure}[t]
    \centering
    \begin{subfigure}{0.28\textwidth}
        \centering
        \includegraphics[trim=17 17 17 17, clip, width=\textwidth]{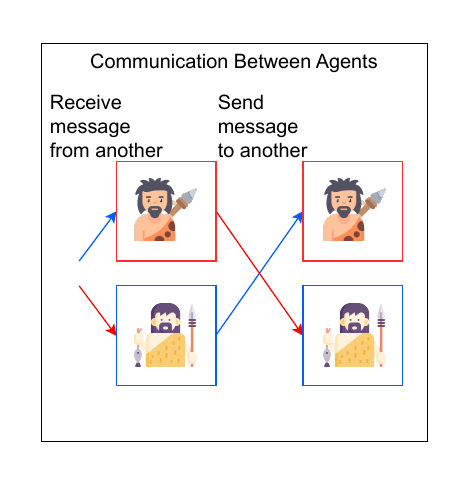}
        \caption{Message Exchange}
        \label{fig:exchange_message}
    \end{subfigure}
    \begin{subfigure}{0.53\textwidth}
        \centering
        \includegraphics[trim=10 10 10 10, clip, width=\textwidth]{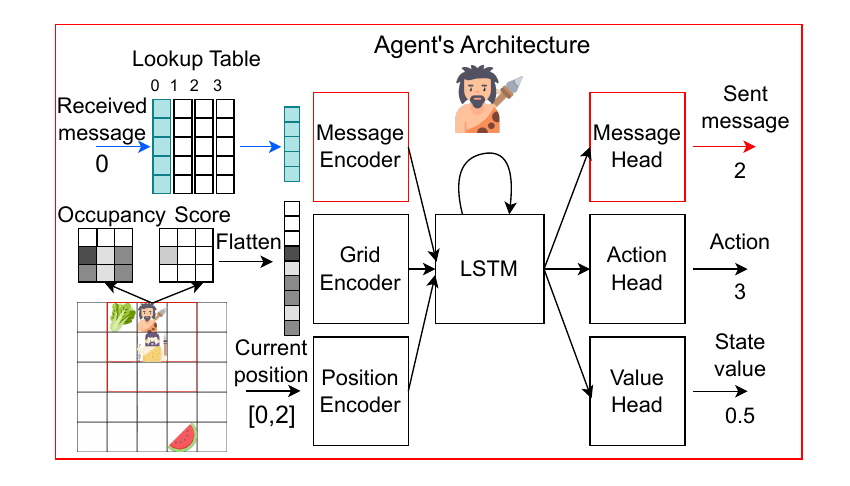}
        \caption{Architecture Overview}
        \label{fig:agent_architecture}
    \end{subfigure}
    \vspace{-2mm}
    \caption{\textbf{A graphical overview of our method.} (a) Agents exchange messages at every time step. (b) The neural architecture of a single agent. On the input side, the received integer message is mapped to a real-valued vector via a learnable lookup table and passed through the message encoder. The grid observation and agent position are processed by the grid encoder and position encoder, respectively. The outputs of all three encoders are concatenated and passed to an LSTM module, which maintains temporal memory. On the output side, the message head, action head, and value head produce the next message token, environment action, and estimated state value.}
    \label{fig:agent_comm}
    \vspace{-5mm}
\end{figure}
\vspace{-3mm}
\section{Foraging Games (FG)}
\label{sec:fg}
\vspace{-3mm}
Foraging-style games are used as benchmarks for multi-agent reinforcement learning (MARL) algorithms \cite{albrecht2013gamefg, papoudakis2021agentfg, yang2022transformerfg, jafferjee2022tamingfg, jaques2019social, ikram2021hexajungle}. However, existing environments often assume full observability and are not designed to support the study of emergent communication. We develop our FG, in which decentralized agents share a common goal and must navigate to and simultaneously pick up the goal items to earn equal rewards.
Our FG is designed to encourage agents to communicate bidirectionally about their observations. Each agent is provided with only partial knowledge of the environment (e.g., observing only one item). Agents must reach a consensus based on their perspectives by communicating the message from a fixed-size dictionary with learnable embeddings and choosing appropriate actions, where the action space is \{move left, move right, move up, move down, and pick up\}. FG has two games: \textit{ScoreG} and \textit{TemporalG}. The former is used to investigate four properties of emergent language: \textbf{interchangeability} (agents can understand the messages they produce), \textbf{arbitrariness} (symbols acquire meaning through social agreement rather than inherent structure), \textbf{compositionality} (messages are built from reusable parts reflecting task semantics), and \textbf{cultural transmission} (language patterns are passed across agents), under varying population sizes and training regimes. The latter focuses on \textbf{displacement}, aiming to determine whether agents’ messages encode temporal and spatial information about seen items in the past.
\vspace{-3mm}
\paragraph{Game 1: \emph{Pickup High Score (ScoreG)}} is set in a $2D$ grid world with the size of $5 \times 5$, containing $2$ items, $2$ agents, and a wall, as shown in \autoref{fig:task}. The wall refers to the area outside the $5 \times 5$ grid. Each grid cell can be occupied by either an agent or an item, but not both simultaneously. Agents cannot move to the wall, the occupied grid cell, or the same grid cell. Each agent receives an observation $\mathbf{o}^{(t)} = (\mathbf{x}^{(t)}, \mathbf{p}^{(t)})$, where $\mathbf{x}^{(t)} \in \mathbb{R}^{3 \times 3 \times 2}$ ($3$ is the receptive field size and 2 refers to two channels). The first channel of the receptive field $\mathbf{x}^{(t)}$ represents an occupancy map: a value of zero indicates that a grid cell contains neither an item nor a wall. 
The second channel shows the item's score. The receptive field of each agent has its center at the agent and observes the agent's surroundings, delineated as a red box in \autoref{fig:task}. To simulate incomplete knowledge, each agent observes the score of one item during an episode. The agent’s absolute position in the environment is given by $\mathbf{p}^{(t)} \in \mathbb{R}^2$. The goal item is defined as the item with the highest score. An episode is deemed successful if both agents navigate to and simultaneously pick up the goal item. In this case, they receive a shared positive reward of $+1$; otherwise, they receive a shared negative reward of $-1$. Additionally, similar to \cite{wijmansdd}, successful episodes grant both agents a bonus reward $r_b$ based on their efficiency: $r_b = \frac{T_{\max} - T_{\text{total}}}{T_{\max}}$, where $T_{\max}=10$ is the maximum number of steps allowed per episode, and $T_{\text{total}}$ is the total number of steps taken by the agents. The bonus is to encourage the agents to complete the game as soon as possible. To prevent agents from overfitting to specific scores in the \emph{ScoreG} task while still evaluating in-distribution generalization, we use a held-out set of test scores that do not overlap with those seen during training. Specifically, training scores are randomly sampled from the set $\{5, 10, 15, \dots, 250\}$, while test scores are sampled from the disjoint set $\{2, 4, 6, 8, 12, \dots, 248\}$. This setup ensures that test scores lie within the same distributional range as the training scores, while remaining unseen during training. We also include the game variant to unseen item positions in \autoref{appendix:gen_unseen_pos}. To isolate the role of explicit communication, agents are made invisible to one another by default, i.e., a grid cell occupied by another agent appears empty. This design prevents any form of implicit communication through visual cues \cite{wang2025scout, grupen2022multi, li2021deep, li2024explicit}.

\vspace{-2mm}
\paragraph{Game 2: \emph{Pickup Temporal Order (TemporalG)}} Two agents must cooperatively pick up two items by first navigating to and simultaneously collecting the first spawned item, followed by the second. The pickup must occur in the same order as the items appear.
The environment follows the same configuration as \emph{ScoreG}, except the agent's observation $\mathbf{x}^{(t)}$ excludes the score channel, and the episode length is limited to $T_{\text{max}} = 20$.
Initially, agents are spawned on opposite sides of the grid and remain frozen at time step $t=1$ to $t=6$. Subsequently, each item appears at a distinct time step, drawn uniformly from the fixed duration set $\{1, 2, 3, \ldots, 6\}$. Each item is guaranteed to appear within at least one agent’s receptive field, ensuring it is visible when it spawns. Agents begin moving and attempting to pick up the items only after $t = 6$. To encourage both spatial and temporal displacement, communication is restricted to adjacent grid cells; otherwise, a zero-value message is transmitted. Rewards follow the same scheme as in \textit{ScoreG}.

\section{Experimental Setting}
\vspace{-2mm}
% \MM{very well written overall except for several minors (see bleow). Enjoyable read!}
\label{sec:exp_setting}
%\subsection{Embodied Social Agents}
%\vspace{-2mm}
%\label{sec:embodied_social_agents} 
%Subsequently, we refer to $\mathbf{E}$ as a lookup table of size $\mathbb{R}^{4 \times 16}$, with each entry corresponding to a unique message embedding of dimension $16$.
%\MM{call me to discuss; why E and M? what are the differences?}

%This section provides an overview of how agents interact with their partners and the environment, and how they are trained. A formal description is also given in \autoref{appendix: learning_ppo}. 

\noindent \textbf{Embodied Social Agents.}
As shown in \autoref{fig:agent_architecture}, each agent in our environment learns both to produce actions and to send and receive discrete messages through a policy trained with PPO \cite{schulman2017proximal} (\autoref{appendix: learning_ppo}). For an experiment on unidirectional communication, see \autoref{appendix:unidirect}.
We denote the overall policy of an agent as \(\psi = (\pi, \phi)\), where \(\pi\) is the action policy that selects an environment action, and \(\phi\) is the communication policy that selects a discrete message $\mathbf{m}^{(t)}$ to send.
We formulate communication as a sequence of discrete messages.
At each time step $t$, an agent is only allowed to send one discrete message, where each message is represented by a single integer \( \mathbf{m}^{(t)} \in \{0, 1, 2, 3\} \). This integer indexes into a lookup table of 4 learnable embeddings (each of dimension 16) in the receiving agent. Each agent maintains its own message lookup table, which is not shared. In other words, upon receiving a message \( \mathbf{m}^{(t)} \), the agent retrieves the corresponding message embedding from its own lookup table. We also report results for vocabulary sizes other than the default of 4 in \autoref{tab:vocab_comparison}.

%Both agents' policies are trained jointly and are of the same neural network architecture, as illustrated in \autoref{fig:agent_architecture}.

%, as shown in \autoref{fig:agent_architecture}. 
%They are trained jointly and are parameterized by a single neural network that processes the agent’s input and produces the outputs.

All agents share the same neural network architecture but are initialized with independent random parameters. 
At each time step, the agent receives three inputs: a partial grid observation $\mathbf{x}^{(t)}$, its own position $\mathbf{p}^{(t)}$, and the message $\mathbf{m}^{(t-1)}$ from its partner sent in the previous time step (see \autoref{fig:exchange_message}). Each input is encoded separately, using a multi-layer perceptron for the grid, a linear layer for the position, and a multilayer perceptron for the message. The encoded features are concatenated and passed into a long short-term memory network (LSTM) \cite{hochreiter1997long}, which maintains temporal information in a working memory and outputs a hidden state. This hidden state is used to produce three outputs: a distribution over actions, a distribution over messages, and a scalar value estimate used for computing the PPO advantage \cite{schulman2015high}. Both the action and the message are sampled from their respective probability distributions predicted by the agent. The message is used to index into the receiver agent’s lookup table, retrieving the corresponding embedding, which is then used by the communication policy \(\phi\) of the receiver agent to produce the message output for the current time step $t$.
%\MM{i don't understand this; call me; once I sent you an integer and then so what? how do you retrieve msg from integer?}

%\vspace{-3mm}
%\subsection{Training Regimes}
%\vspace{-2mm}
\noindent \textbf{Training Regimes.}
We train each agent independently using standard single-agent PPO, with no shared parameters or centralized critic. Each agent treats its partner as part of the environment and learns solely from its own experience. Communication policies emerge through interaction and reward, without explicit supervision. This decentralized setup is intentional: it allows agents to specialize, diverge, and adapt to others, enabling us to study how communication protocols evolve under varying population sizes, agent heterogeneity, and connectivity patterns among agents.
To investigate how communication strategies emerge and generalize in populations, we explore two training regimes that differ in how agents interact with other partners. These regimes allow us to study the effects of population diversity, message alignment, and exposure to self-produced messages on communication development.
\noindent \textbf{Cross-Play Training (XP):} We define the agent population size as \( N_{\text{pop}} \). In each training episode, a pair of agents is randomly sampled from the population to play together.
\noindent \textbf{Cross-Play-and-Self-Play Training (XP+SP): } Motivated by this aspect of inner monologue, humans being able to speak to themselves, in each episode, we randomly select either two distinct agents (Cross-Play, XP) or the same agent twice (Self-Play, SP). Specifically, for each episode, we randomly sample agents $i$ and $j$ such that either $i \neq j$ or $i=j$, where $i,j \in \{1,\dots,N_{\text{pop}}\}$.

%\noindent \textbf{Cross-Play Training (XP):} We define the size of an agent population as \( N_{\text{pop}} \). Each agent maintains its own policy. In each training episode, the unique policies of a pair of agents are randomly sampled from the population to play together. 

% \vspace{-2mm}
% \subsection{Social Network Structure}
% \label{sec:social_net}
% \vspace{-2mm}
\noindent \textbf{Social Network Structure.}
We consider several social network structures (\autoref{fig:social_net}): 
Fully Connected (FC), where every agent interacts with all others; Ring, where each agent interacts only with immediate neighbors; Ring with Cliques (Clq), which adds short-range clusters to the ring; Watts–Strogatz (WS), which rewires a fraction of ring edges to create small-world shortcuts; and Ring with Long-Range Connections (LRC), which augments the ring with distant edges. FC represents the dense baseline. Ring is the sparsest structure while still connected, allowing us to test how communication propagates along long paths. Adding cliques or long-range links (Clq, WS, LRC) reduces social distance and is expected to strengthen language transmission compared to Ring ( \autoref{appendix:cultural_tranmission}).

%\revise{Rebuttal Table 1}.
%, to assess if agents understand the messages they produce themselves, i.e., the generalization of language within agents. 
% \vspace{-2mm}
% \subsection{Evaluation Metrics on Language Properties}
% \vspace{-2mm}
% \label{sec:metrics}
\noindent \textbf{Evaluation Metrics on Language Properties.}
To evaluate linguistic properties in emergent communication, we use two standard metrics and a new metric below. %Topographic Similarity \cite{brighton2006understanding, lazaridou2018emergence} and Language Similarity \cite{kim2021emergent, rita2022role, michelrevisiting}. 
%We also introduce a new metric, Interchangeability below. 
Formal definitions of these metrics are provided in \autoref{appendix:metrics}. Briefly, \textbf{Topographic Similarity (\emph{topsim})} \cite{brighton2006understanding, lazaridou2018emergence} measures the structural alignment between message space and semantic space. It is computed as the Spearman correlation between pairwise distances in the message space and the corresponding distances in the semantic space, which reflects the agent’s observations and context. A higher \emph{topsim} indicates a more compositional and consistent mapping between semantic meanings and messages. 
%We define the semantic space using ground-truth attributes of the environment, specifically the scores and positions of items in the \textit{ScoreG} game of FG. To implement the topographic similarity metric, we use the EGG toolkit \cite{kharitonov2019egg}.
\textbf{Language Similarity (\emph{LS})} \cite{kim2021emergent, rita2022role, michelrevisiting} measures how similarly two agents communicate in the same situation. It compares the sequences of discrete messages each agent produces and computes the average agreement between them. 
%A higher score means the agents tend to use the same messages in similar contexts, indicating stronger convergence in their communication strategies.
\textbf{Interchangeability (\emph{IC})} measures whether an agent can understand the language it produces. We define \textbf{Self-SR} as the success rate when an agent plays with a copy of itself, and \textbf{Cross-SR} as the success rate when paired with a different agent.
The overall success rate is denoted as \textbf{SR}. 
\emph{IC} is the ratio of Self-SR to Cross-SR. A high \emph{IC} indicates that agents can better generalize and interpret their own language.
%self-understandable communication. 
%This reflects a fundamental property of language that speakers can comprehend the language they use.
%(See Appendix \autoref{fig:pickup_high_learning_dynamics}).
% \vspace{-2mm}
% \subsection{Implementation Details}
% \vspace{-1mm}

\noindent \textbf{Implementation Details.}
All training can be performed on a single GeForce RTX 4090 (See Appendix \autoref{fig:pickup_high_learning_dynamics} and \autoref{fig:pickup_temp_learning_dynamics}). We use the ADAM optimizer \cite{kingma2014adam} with an initial learning rate of $0.00025$, which linearly decays over time. We report the hyperparameters of the architecture and training algorithm in \autoref{appendix: model_training} (\autoref{tab:neural_arch} and \autoref{tab:ppo_hyperparams}).

%\textbf{Message analysis.}
To examine emergent communication properties, we collect each agent’s messages over all time steps in every episode. For the \emph{ScoreG} task, messages are concatenated into a single message chain per agent per episode. In the \emph{TemporalG} task, agents are only allowed to communicate when they occupy adjacent grid cells. Thus, we begin collecting messages only after their first adjacency and concatenate all subsequent messages. 
To decode environmental attributes, such as item scores, spawn times, and positions, from message chains, we use one-vs-rest logistic regression (LR). For example, to decode four possible item locations, we train four binary classifiers, each distinguishing one location from the rest. During inference, the class with the highest predicted probability is selected.
Each LR model is trained on 3,500 message chains and tested on 1,500, using 3-fold cross-validation across 3 random seeds. We report message chain decoding accuracy as the average across all agents. Other metrics, including \emph{LS}, \emph{IC}, \emph{topsim}, and \emph{SR}, are computed using 1,000 test episodes.

\vspace{-3mm}
\section{Result}
\label{sec:results}
\vspace{-3mm}
\begin{table}[t]
\footnotesize
\centering
\caption{
\textbf{Cross-play training can cause non-interchangeable languages.} Game performance and language similarity (LS;  \autoref{appendix:metrics}) comparison across different training regimes. $N_{\text{pop}}$ is the population size. 
%Values are reported as mean $\pm$ standard deviation. 
\emph{Cross-SR} and \emph{Self-SR} are the mean ($\pm$ standard deviation) success rates when agents play with others or with copies of themselves, respectively.
%of agents played with others. \emph{Self-SR} is the average success rate of agents played with themselves. 
}
\label{tab:method_comparison}
\begin{tabular}{lccccc}
\toprule
\textbf{Training} & $N_{\text{pop}}$ & \textbf{LS} & \textbf{Cross-SR} & \textbf{Self-SR} \\
\midrule
XP & 2 & $0.215 \pm 0.002$ & $0.987 \pm 0.002$ & $0.065 \pm 0.054$ \\
XP+SP & 2 & $0.598 \pm 0.010$ & $0.968 \pm 0.005$ & $0.968 \pm 0.002$ \\
XP & 3 & $0.527 \pm 0.036$ & $0.977 \pm 0.007$ & $0.944 \pm 0.018$ \\
\bottomrule
\end{tabular}
\vspace{-4mm}
\end{table}
%\revise{M2Dx.1, M2Dx.4:  I like to read an objective, quantitative account of the experimental results before I am told how they might be interpreted with respect to the research question. This is a preference which I acknowledge may not be shared by all. A shorter, crisper, more objective Results section would make the paper better.  In addition, the results section occasionally mingles results and analysis, which I found confusing.} \maytus{I don't agree with "mixing results with analyses is a bad thing and confusing". Many scientific papers did that and they read well (See Alpha Fold on Nature for example). The part to be improved is providing motivation and objective of each study before reporting the results.}
We investigate the emergence of linguistic properties in the FG framework according to previously stated research questions (\textbf{Q1-Q5}) and present our findings in this section. Each experimental setting is defined by four factors: the game type, agent population size, social structure, and training regime. These factors influence the emergence of different language properties, making exhaustive analysis infeasible. Instead, we report results from a representative subset of settings.
For clarity, we adopt the naming convention for each experimental setting: \texttt{[game]-[population size]-[social structure]-[training regime]}. For example, \emph{ScoreG}-P15-Ring-XP+SP refers to a setting where $N_{\text{pop}}=15$ agents in a ring-structured network are trained with XP+SP regime on the \emph{ScoreG} game. This convention is used throughout our analysis.

\textbf{Increasing population size and self-play training support interchangeable language [\emph{ScoreG}-FC].}
%\textbf{[Game Setup]} We perform analysis on the agents in \emph{TemporalG}-Ring-XP+SP. 
%\revise{M2Dx.1, M2Dx.4: Motivation and objective for the experiment.} 
Since decentralized agents do not share neural network parameters, it is natural to wonder whether they develop a common language at all. A curious possibility is that one agent might produce L1 while only understanding L2, with the other doing the opposite. Moreover, it is possible that an agent trained with its partner does not understand the language it produces.
As shown in \autoref{tab:method_comparison}, XP agents trained exclusively with each other fail to understand their own messages when paired with a copy of themselves during test time as indicated by a drop of 6\% in \emph{Self-SR}.
Surprisingly, when agents are trained with
$N_{\text{pop}} = 3$, they maintain high \emph{Self-SR} even without explicit self-play, suggesting that population training encourages the emergence of interchangeable language. These agents also develop more consistent language, as indicated by a higher \emph{LS} of 0.53 compared to the XP alone. Furthermore, XP+SP agents achieve high \emph{Self-SR} due to their direct exposure to SP. Interestingly, they also exhibit greater \emph{LS} across agents, with an \emph{LS} of 0.59, outperforming XP agents even in population settings. Agents with high \emph{LS} produce similar messages, while agents with low \emph{LS} produce distinct ones in the embedding space (\autoref{fig:embedding_by_agents}). Finally, we provide a theoretical analysis showing why training with more partners and with SP leads to more common language in \autoref{sec:theory}.

\textbf{Population size affects compositionality [\emph{ScoreG}-FC].}
%\revise{M2Dx.1, M2Dx.4: Motivation and objective for the experiment.} 
Linguistic and cognitive studies suggest that larger communities are more likely to develop compositional languages \cite{reali2018simpler, raviv2019larger}. Motivated by this, we investigate how compositionality varies with increasing population size.
As shown in \autoref{fig:topsim_vs_pop}, population size has an effect on \emph{topsim}. For XP agents, \emph{topsim} increases from population size 2 and saturates from size 6. In contrast, XP+SP agents show a more consistent upward trend in \emph{topsim} as population size increases, plateauing around size 12–15. Interestingly, XP+SP agents trained in a population of size 2 also achieve the highest \emph{topsim} across all population sizes. However, we caution against the interpretation of effective language from such a small population because this language could be overfitting to a fixed partner and may not generalize well to more diverse partners. While population training promotes consistency, we observe that compositionality does not increase monotonically, aligned with results reported in previous studies \cite{kim2021emergent, rita2022role, michelrevisiting}.

\begin{figure}[t]
    \centering
    \begin{subfigure}[b]{0.24\textwidth}
        \centering
        \includegraphics[trim=10 20 10 10, clip, width=\textwidth]{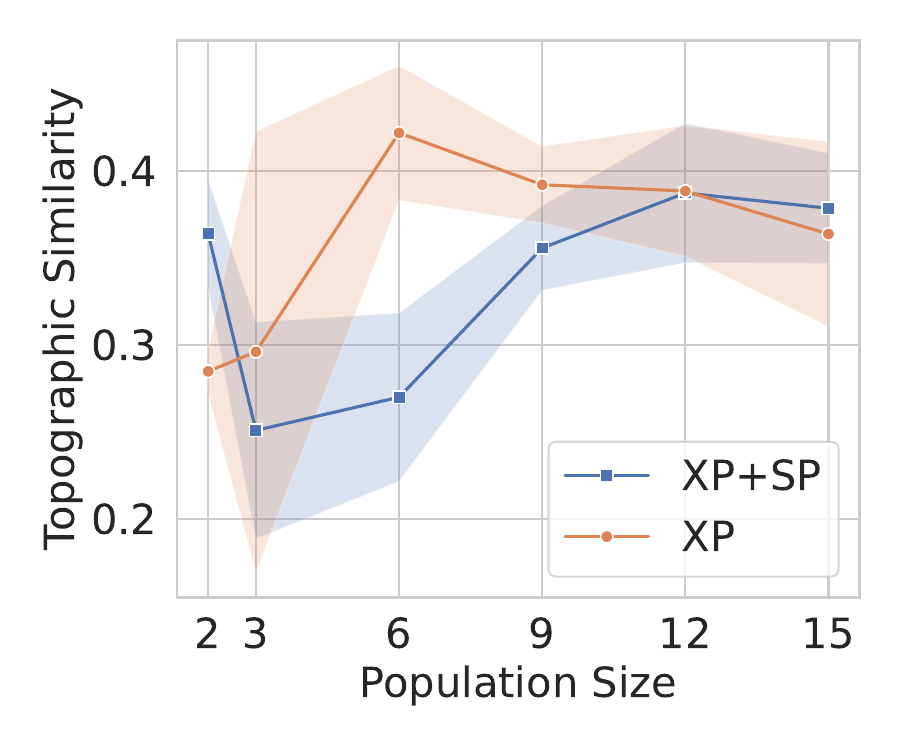}
        \caption{\emph{topsim} across $N_\text{pop}$}
        \label{fig:topsim_vs_pop}
    \end{subfigure}
    % \hfill
    \begin{subfigure}[b]{0.24\textwidth}
        \centering
        \includegraphics[trim=10 20 10 10, clip, width=\textwidth]{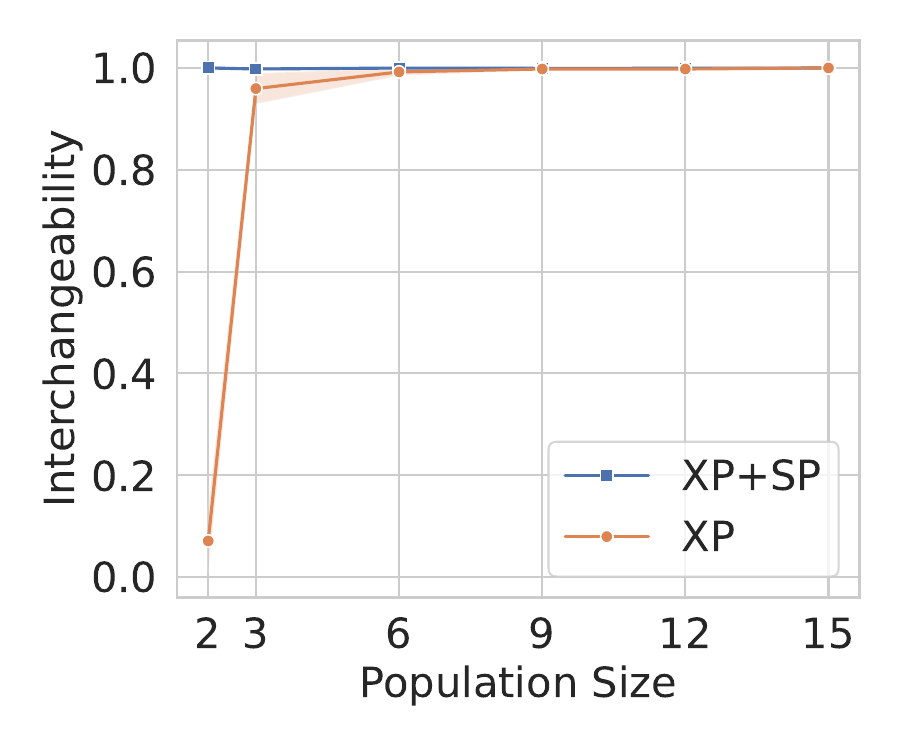}
        \caption{\emph{IC} across $N_\text{pop}$}
        \label{fig:ic_vs_pop}
    \end{subfigure}
    % \hfill
    \begin{subfigure}[b]{0.24\textwidth}
        \centering
        \includegraphics[trim=10 20 10 10, clip, width=\textwidth]{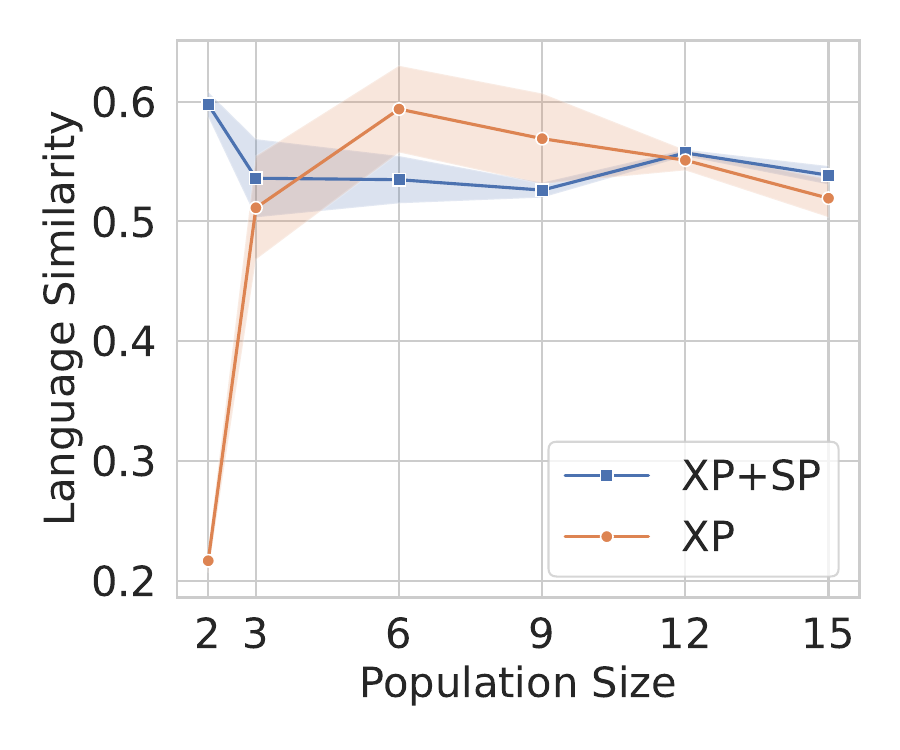}
        \caption{\emph{LS} across $N_\text{pop}$}
        \label{fig:ls_vs_pop}
    \end{subfigure}
    \vspace{-2mm}
    \caption{\textbf{The effect of population training with fully-connected social networks on language properties.} (a) Topographic Similarity (\emph{topsim}), (b) Interchangeability (\emph{IC}), and (c) Language Similarity (\emph{LS}) as a function of population sizes are shown. See \autoref{appendix:metrics} for metrics details.
    }
    \label{fig:comparison}
    \vspace{-7mm}
\end{figure}

\textbf{Population size affects language similarity and interchangeability [\emph{ScoreG}-FC].}
%\revise{M2Dx.1, M2Dx.4: Motivation and objective for the experiment.} 
We observed that a population of three agents can develop a shared, interchangeable language. We now ask whether this effect persists as the population size increases.
\autoref{fig:ic_vs_pop} shows that IC quickly saturates to nearly 1.0 for XP agents once the population size exceeds 2. This indicates that agents can understand their own messages. For XP+SP agents, IC reaches the maximum level across $N_{\text{pop}}\geq
2$. As shown in \autoref{fig:ls_vs_pop}, for XP agents, LS increases sharply from size 2 to 3, then gradually plateaus, indicating that training with multiple partners encourages the emergence of a shared protocol. XP+SP agents, in contrast, show consistently high LS across all population sizes. This suggests that self-play helps promote consistent, shared communication protocols within a population. Together, these results suggest that both self-play and population diversity contribute to the emergence of interchangeable and shared communication. Unlike our findings, prior work \cite{kim2021emergent} reported that larger population sizes reduce language similarity, as listeners adapt to multiple speaker-specific languages. This discrepancy may stem from their unidirectional communication and centralized training, resulting in different optimal solutions.

%their unidirectional communication setup and shared gradients through the communication channel likely result in different optimal solutions than our bidirectional, decentralized (no shared gradients) setting.

\textbf{Self-play enhances cultural transmission in weakly-connected networks [\emph{ScoreG}-P15].} 
%\revise{Rewrite this part M2Dx.1, M2Dx.4: Motivation and objective for the experiment.} 
Human language evolves in sparsely connected societies, giving rise to a shared language alongside dialects and foreign languages. To investigate how language propagates in such settings, we train agents within weakly connected Ring and Clq social networks. The population size is fixed at $N_{\text{pop}} = 15$, with each agent interacting only with a few neighbors. This setup allows us to examine how culture is transmitted across indirect connections at test time. As shown in \autoref{fig:ring_ls_sr}, both \emph{LS} and \emph{SR} generally decline with increasing distance between agents. This indicates that language partially propagates through non-co-trained agents: nearby neighbors develop more similar languages than distant ones. Importantly, the learned language also generalizes to indirectly connected agents, albeit with reduced \emph{SR}. An exception arises for XP agents under the Ring network, which display a distinctive zig-zag pattern in \autoref{fig:ls_vs_dis} and \autoref{fig:sr_vs_dis}. Here, an agent produces messages more similar to those of a non-neighbor than to its direct partner (\autoref{fig:ring_xp_ls}). While agents successfully learn to interpret their neighbors, they fail to produce messages aligned with the language they understand (\autoref{fig:ring_xp_ls} and \autoref{fig:ring_xp_sr}), highlighting a breakdown in interchangeability. Furthermore, as shown in \autoref{fig:ls_vs_dis} and \autoref{fig:clq_sm_ls_vs_dis}, self-play training yields consistently higher \emph{LS} across all social distances compared to cross-play training. Together, these findings suggest that self-play not only enhances within-agent consistency but also supports more stable and transmissible communication protocols in structured populations. Finally, we find that adding more connections to Ring can greatly enhance cultural transmission (\autoref{fig:sm_ls_sr}).

\textbf{Spatial and temporal displacement emerge in agent communication [P3-FC-XP].} Humans can refer to past or future events, for example, I saw an apple yesterday or I will collect an apple tomorrow. They can also refer to spatially distant objects, such as an apple in the forest. This ability to refer to things beyond the immediate here and now is known as displacement in natural language. We investigate whether displacement can emerge in artificial foraging agents using the \emph{TemporalG} game.
XP agents with $N_{\text{pop}}=3$ achieve strong performance in this setting, with a Cross-SR of $0.975$ and a Self-SR of $0.962$ (\autoref{tab:temporal_order_performance}). Since the communication range is limited, the agents adopt a rendezvous strategy, meeting near the center of the grid map to establish communication (\autoref{fig:rendezvous}). Next, we decode spatial and temporal displacement information from the messages. The items can take on one of 6 possible spawn times, 4 vertical positions (excluding the grid's center), and 5 possible horizontal positions.
%, such as 4 vertical positions excluding the middle of the grid and 5 horizontal positions of the items. 
The decoding accuracy for the spawn time and position of items is above chance (\autoref{fig:decode_temp}), suggesting that messages function as temporal adverbials and spatial references. 
We also decode items' scores and positions from XP agents ($N_{\text{pop}}=3$) in the \emph{ScoreG} game (\autoref{fig:decode_high}). The items can take on one of 10 possible score ranges, 2 possible vertical positions (top and bottom), and 5 possible horizontal positions.
The results align with those observed in the \emph{TemporalG} game. Moreover, 
compared to \emph{Integer Msg}, decoding with chains of message embeddings yields higher accuracy, suggesting that these embeddings encode more meaningful and linearly separable spatial and temporal features. Interestingly, when agents collect targets remotely without movement, their messages stop conveying spatial information (\autoref{fig:decode_rg}), highlighting the role of embodiment in shaping language.

\begin{figure}[t]
    \centering
    \begin{subfigure}[b]{0.24\textwidth}
        \centering
        \includegraphics[trim=10 20 10 10, clip, width=\textwidth]{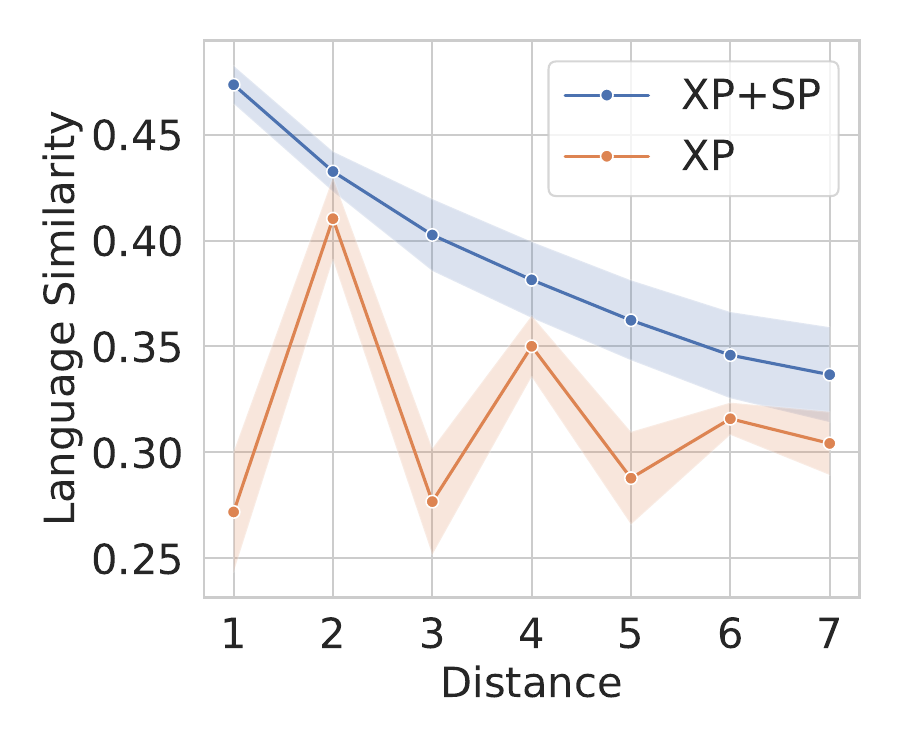}
        \caption{\emph{LS}}
        \label{fig:ls_vs_dis}
    \end{subfigure}
    \begin{subfigure}[b]{0.24\textwidth}
        \centering
        \includegraphics[trim=10 20 10 10, clip, width=\textwidth]{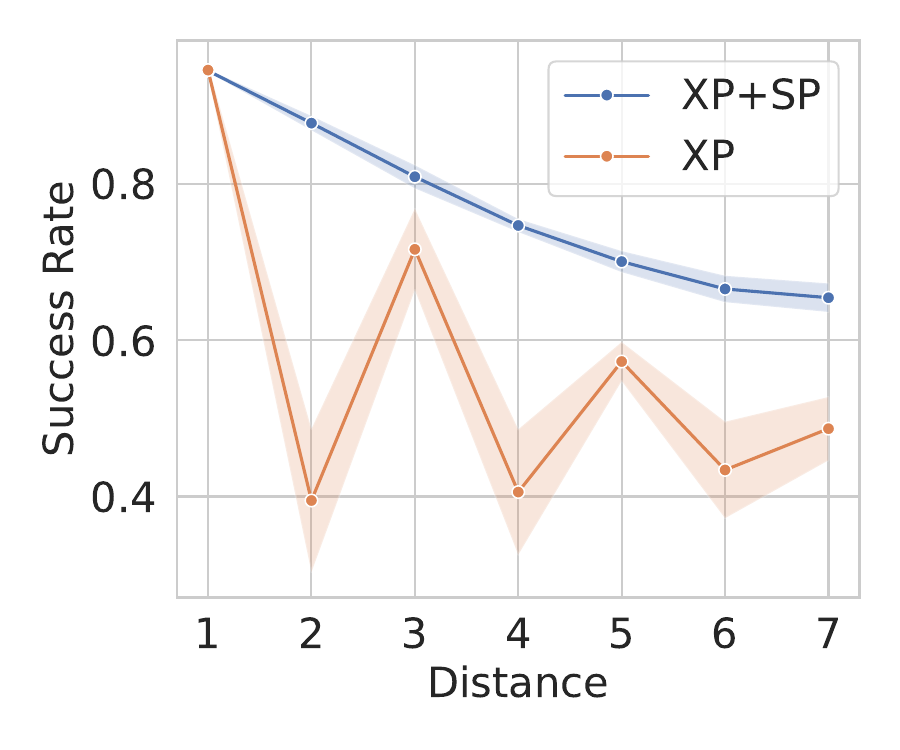}
        \caption{\emph{SR}}
        \label{fig:sr_vs_dis}
    \end{subfigure}
    \begin{subfigure}[b]{0.24\textwidth}
        \centering
        \includegraphics[trim=10 20 10 10, clip, width=\textwidth]{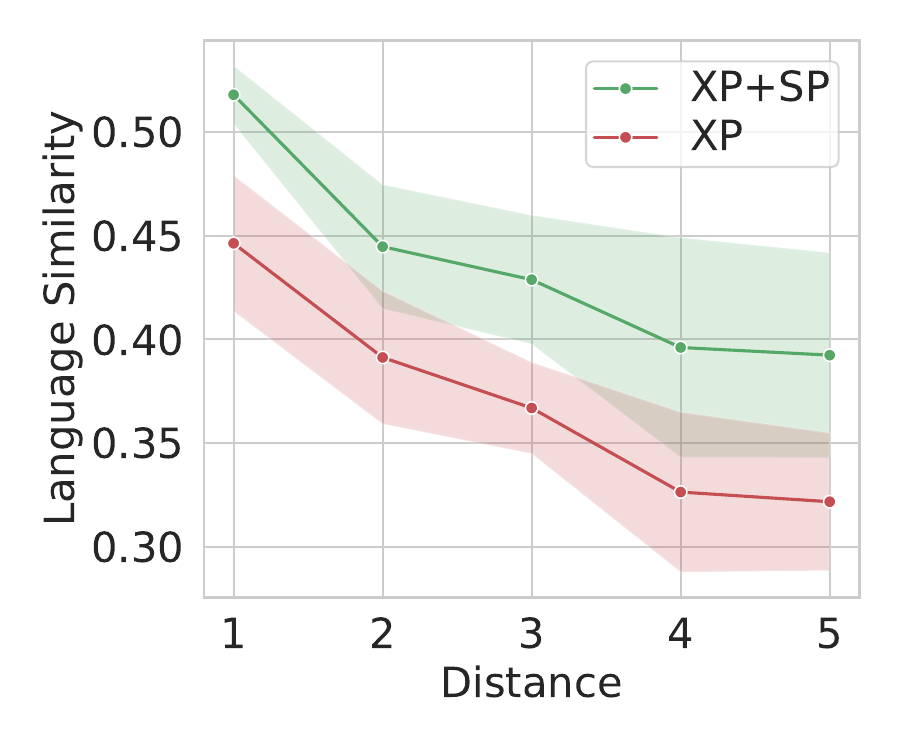}
        \caption{\emph{LS}}
        \label{fig:clq_sm_ls_vs_dis}
    \end{subfigure}
    \begin{subfigure}[b]{0.24\textwidth}
        \centering
        \includegraphics[trim=10 20 10 10, clip, width=\textwidth]{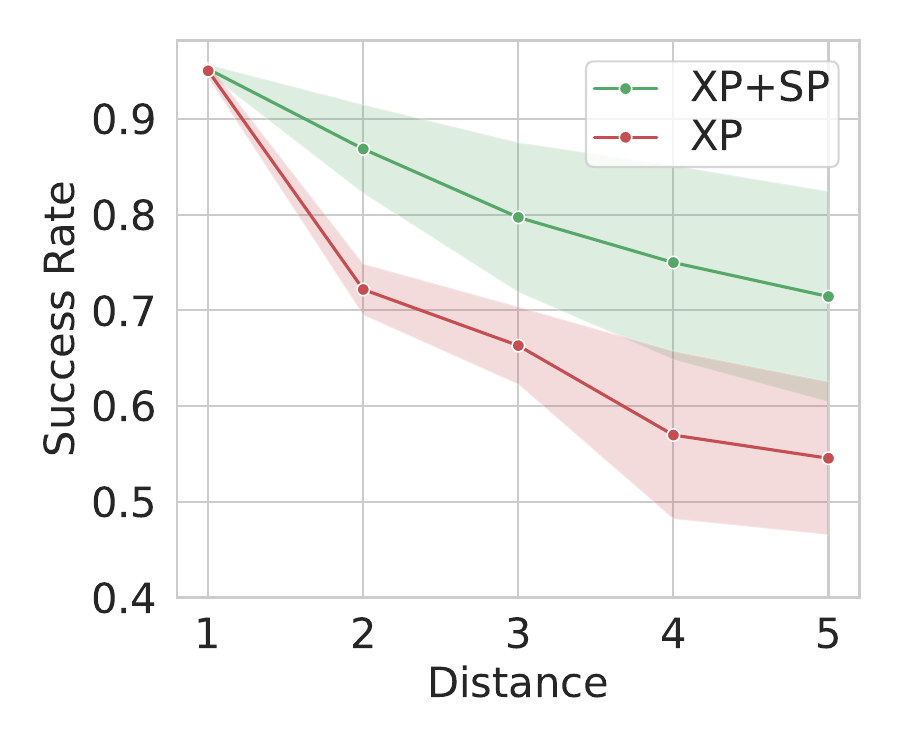}
        \caption{\emph{SR}}
        \label{fig:clq_sm_sr_vs_dis}
    \end{subfigure}
    \vspace{-2mm}
    \caption{\textbf{Self-Play (SP) promotes cultural transmission across social networks ($N_{\text{pop}}=15$):} We plot \emph{LS} and \emph{SR} (see \autoref{appendix:metrics}) as a function of the shortest-path distance between two agents. (a,b) ring-structured network (Ring). (c,d) ring-structured with clique network (Clq). 
    }
    \label{fig:ring_ls_sr}
    \vspace{-2mm}
\end{figure}

\textbf{Implicit communication can emerge when explicit message communication is disabled [\emph{ScoreG}-P3-FC-XP].} We study whether agents can convey information through their sequence of actions in an episode when the explicit communication channel is disabled. We conduct an ablation on two variables: partner visibility and the presence of explicit verbal communication.
When a partner is invisible, their location appears as an unoccupied cell in the observing agent's receptive field. We train XP agents in \emph{Inv-NoCom} and \emph{Vis-NoCom}, where agents either have or lack partner visibility, but no explicit message communication is allowed. 
%\autoref{fig:ablation_all} for the description and results. 
If an agent observes an extreme score (e.g., 222 or 22), it can confidently infer whether the item is a goal or not. To isolate this confounding factor, we evaluate agents only under the condition that both partners observe high scores. As shown in \autoref{fig:ablation_all}, \emph{Inv-Com} agents (XP agents trained in our default FG environments) achieve SR of $85\%$. Without communication, \emph{Inv-NoCom} agents perform below chance ($40\%$), likely because they fail to coordinate goal pickup without seeing each other. See more failure analysis in \autoref{appendix:implicit-sr}.
Interestingly, \emph{Vis-NoCom} agents achieve an SR of 60\%, outperforming \emph{Inv-NoCom}. This indicates that both \emph{Inv-Com} and \emph{Vis-NoCom} agents can glean information from their partners—either via explicit messages or by observing partners’ actions. We also evaluate the average length of successful test episodes. Both \emph{Inv-Com} and \emph{Vis-NoCom} agents take around 6 steps, while \emph{Inv-NoCom} agents take about 5 steps. This indicates that agents with more accurate communication require extra steps to send meaningful signals, increasing their bandwidth and hence, longer episode length. Another possible strategy for implicit communication is bumping into each other; however, our analysis in \autoref{appendix:scoreg_bumping} shows this is not the case. 
The similar analysis above on implicit communication in \emph{ScoreG} is also applicable in \emph{TemporalG} (\autoref{appendix:implicit_temporalg}).

\begin{figure}[t]
  \centering

  %--- Left minipage: Message Analysis ---
  \begin{minipage}[t]{0.48\textwidth}
    \centering
    \begin{subfigure}[b]{0.48\textwidth}
        \centering
        \includegraphics[trim=30 20 5 5, clip, width=\textwidth]{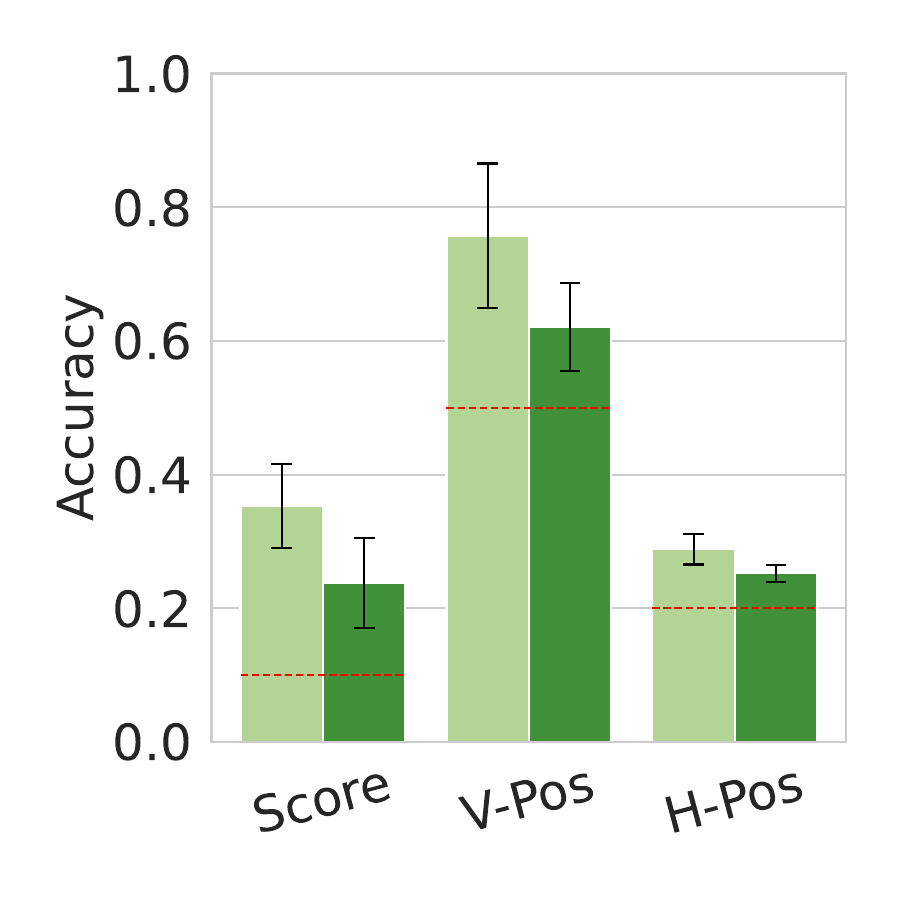}
        \vspace{-4mm}
        \caption{\emph{ScoreG}}
        \label{fig:decode_high}
    \end{subfigure}
    % \hfill
    \begin{subfigure}[b]{0.48\textwidth}
        \centering
        \includegraphics[trim=30 20 5 5, clip, width=\textwidth]{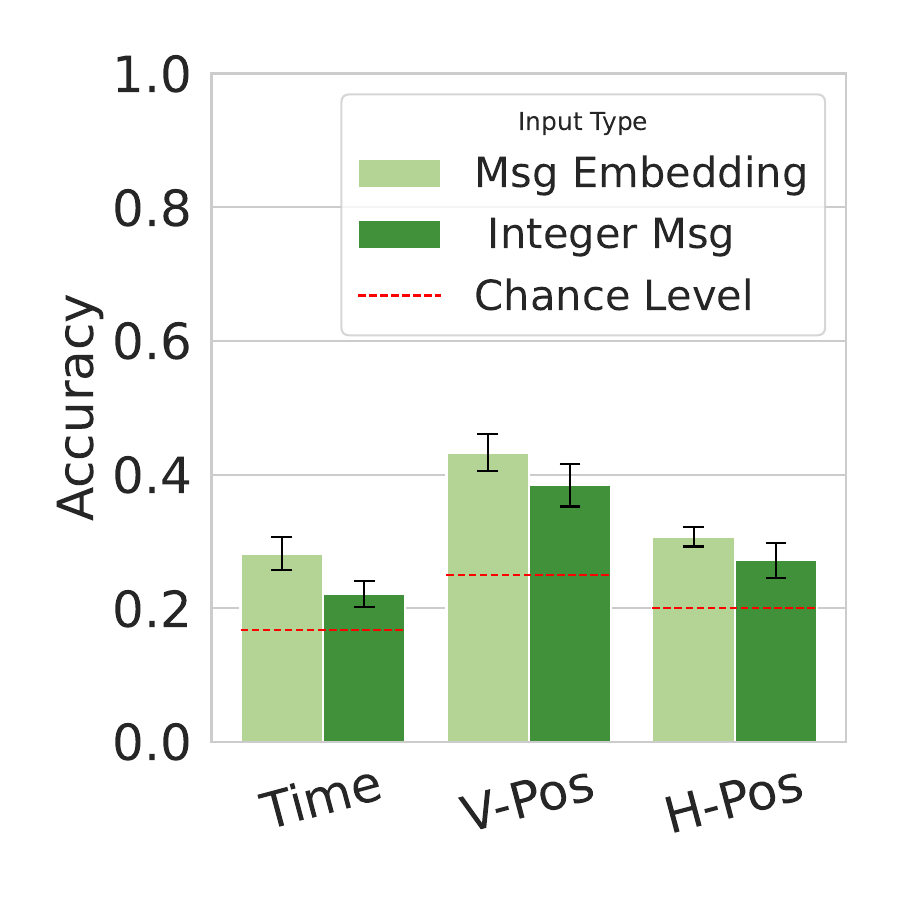}
        \vspace{-4mm}
        \caption{\emph{TemporalG}}
        \label{fig:decode_temp}
    \end{subfigure}
    \vspace{-2mm}
    \caption{
      \textbf{Decoding item states from messages.}
      \emph{V-Pos}/\emph{H-Pos} are item positions, \emph{Score} is item value, and \emph{Time} is item spawn time. \emph{Integer Msg} uses raw message chains composed of integer sequences. \emph{Msg Embedding} uses chained embeddings mapped from a lookup table. Error bars show standard deviations.
    }
    \label{fig:decode_all}
  \end{minipage}
  \hfill
  %--- Right minipage: Ablation Study ---
  \begin{minipage}[t]{0.48\textwidth}
    \centering
    \begin{subfigure}[b]{0.48\textwidth}
        \centering
        \includegraphics[trim=5 10 5 5, clip, width=\textwidth]{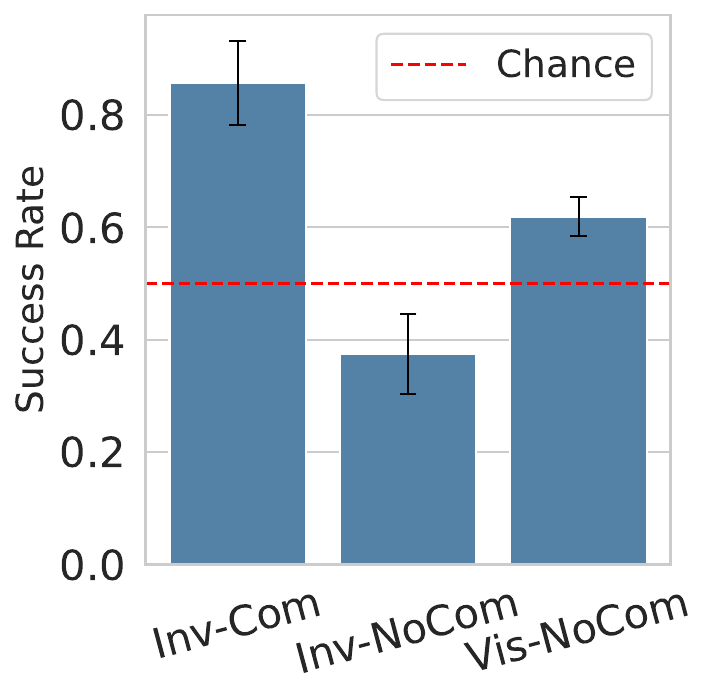}
        \vspace{-4mm}
        \caption{Success Rate}
        \label{fig:ablation_sr}
    \end{subfigure}
    \hfill
    \begin{subfigure}[b]{0.48\textwidth}
        \centering
        \includegraphics[trim=5 10 5 5, clip, width=\textwidth]{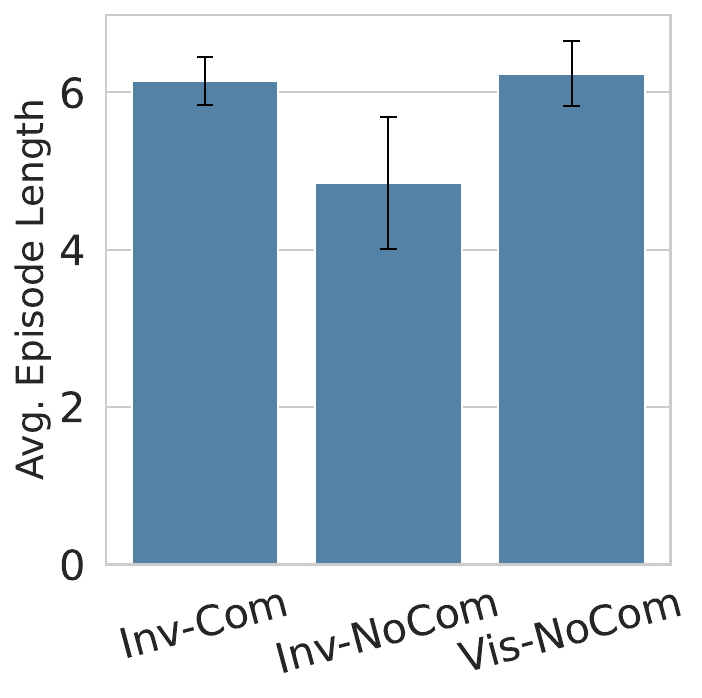}
        \vspace{-4mm}
        \caption{Avg Ep Length}
        \label{fig:ablation_ep_length}
    \end{subfigure}
    \vspace{-2mm}
    \caption{
      \textbf{Implicit communication in the \emph{ScoreG} games.} 
      %is used. 
      \emph{Inv}/\emph{Vis} denote partner visibility; \emph{Com}/\emph{NoCom} indicate presence or absence of verbal communication. Item scores are from $[160, 162, \dots, 240]$. We report average successful episode length; error bars show standard deviations across seeds and agents.
    }
    \label{fig:ablation_all}
  \end{minipage}
  \vspace{-5mm}
\end{figure}

\section{Discussion}
\vspace{-2mm}

Understanding the origins of language is a profound and long-standing challenge that requires insights from linguistics, psychology, neuroscience, and artificial intelligence. Our work represents an initial step toward addressing this complex question. We investigate the language emergence in multi-agent Foraging Games. Through systematic experiments on computational models in simulated environments, we gain valuable insights into key questions in evolutionary linguistics—such as what drives the evolution of language and how it becomes essential for teamwork. To further probe this point, we conducted a non-cooperative control experiment (\autoref{appendix:no_cooperation}), which showed that when agents pursue independent goals, communication provides no advantage and does not emerge meaningfully. This reinforces cooperation as a key ecological pressure in the evolution of language. Remarkably, without any direct supervision from human languages, agents with limited capacity trained in cooperative settings develop five hallmark properties of human language \cite{hockett1960origin}.
\textbf{Interchangeability} emerges through self-play training or social interactions within larger populations
(\autoref{tab:method_comparison}).
%, indicating that agents can understand their own messages. 
\textbf{Arbitrariness} is demonstrated by two cross-play agents developing non-interchangeable languages that are nonetheless mutually intelligible enough to enable successful cooperation and a high task success rate (\autoref{tab:method_comparison} and \autoref{fig:embedding_2xp}).
%\textbf{Arbitrariness} is reflected in the fact that agents develop distinct communication protocols. This is demonstrated by non-interchangeable but mutually intelligible languages developed by two cross-play agents (\autoref{tab:method_comparison} and \autoref{fig:embedding_2xp}). 
%emerges in the structured relationship between messages and input semantic. This
\textbf{Compositionality} is indicated by high \emph{topsim} scores in larger populations, a common metric measuring the structural alignment between messages and environmental semantics (\autoref{fig:topsim_vs_pop}).
%, and successful linear decoding of item attributes like score and position from messages (\autoref{fig:decode_high}), indicating that discrete components of meaning are encoded. 
% in the \emph{TemporalG} game, 
\textbf{Displacement} is demonstrated by high decoding accuracy of items’ positions, scores, and spawn times, showing that agents can refer to when and where currently invisible events happen (\autoref{fig:decode_high}). 
\textbf{Cultural transmission} is evidenced by the gradual decline in LS and SR with increasing distance in structured populations, despite no direct training between distant pairs (\autoref{fig:ring_ls_sr}, \autoref{fig:ring_xpsp_ls}, \autoref{fig:ring_xpsp_sr}). This suggests that culture is transmitted and shaped through interaction within structured social networks. Our framework also shows that communication can emerge in both implicit and explicit forms, which standard referential games do not support.

%\textbf{Limitations:} 
%\revise{ViJF.13: The limitations section is very brief. and VKge.2: Linking the experimental observations to different phenomena in human language.}
% \cite{foerster2016learning, jain2019two, patel2021interpretation}.
%Bridging this gap may require loosening some of the constraints currently imposed, for example, solving more complex tasks may require agents to share gradients or parameters to stabilize the training.
%\cite{sun2023trust, de2020independent, yu2022surprising}. 

Despite these contributions, our work has several limitations. The language emerging in FG is not directly comparable to natural language, remaining simple and lacking syntax and morphology. FG also offers limited semantic richness, as agents primarily communicate about a narrow set of task variables (time, location, score). Progress toward more human-like emergent language will require semantically richer, open-ended environments that encourage broader, compositional, and syntactically structured communication. Finally, our framework does not model turn-taking dynamics, a further challenge for current approaches.

\bibliographystyle{plain}
\bibliography{ref}

\newpage
\renewcommand{\thesection}{S\arabic{section}}
\renewcommand{\thefigure}{S\arabic{figure}}
\renewcommand{\thetable}{S\arabic{table}}
\setcounter{figure}{0}
\setcounter{section}{0}
\setcounter{table}{0}
\appendix

\section{Implementation and Training Details}

\subsection{Learning to Communicate with PPO}
\label{appendix: learning_ppo}
Agents in our setting learn to act and communicate simultaneously. As illustrated in \autoref{fig:agent_comm}, each agent receives an observation from the environment and a message from its partner sent at the previous time step. Based on this input, the agent produces a discrete action and message. During training, the agent also estimates a value function to guide learning. In the following, we formalize this process using the multi-agent reinforcement learning (MARL) framework and describe how action and communication are jointly optimized via the MARL algorithm. The method most closely related to ours is RIAL \cite{foerster2016learning}. However, unlike RIAL and related approaches, we do not assume parameter sharing or use a centralized critic. While Independent PPO \cite{de2020independent} employs parameter sharing among independent agents, a technique that can help stabilize learning \cite{sun2023trust}, our method explicitly removes this parameter sharing. These design choices make our setting more reflective of human-like conditions, where individuals operate independently and learn through decentralized interaction. Moreover, by avoiding parameter sharing, our method enables the study of linguistic and behavioral heterogeneity across agents.

\paragraph{Problem Formulation} Our learning setting is a two-player variant of decentralized partially observable Markov decision processes (DecPOMDP) \cite{bernstein2002_decpomdp}. A two-player DecPOMDP with communication is formally defined as the tuple $ (S, \mathcal{A}_1, \mathcal{A}_2, \mathcal{M}_1, \mathcal{M}_2, \Omega_1, \Omega_2, T, O, r, \gamma, H), $
where $S$ represents the state space, $\mathcal{A} \equiv \mathcal{A}_1 \times \mathcal{A}_2$ denotes the joint-action space, $\mathcal{M} \equiv \mathcal{M}_1 \times \mathcal{M}_2$ denotes the joint-message space, $\Omega \equiv \Omega_1 \times \Omega_2$ signifies the joint-observation space (where $\Omega_1$ and $\Omega_2$ are the individual observation spaces), $T(s'|s, a_1, a_2)$ provides the transition probability from state $s$ to $s'$ upon executing the joint action $(a_1, a_2)$, $O(o_1, o_2|s)$ represents the conditional probability of observing the joint observation $(o_1, o_2)$ given the current state $s$, $r(s, a_1, a_2)$ is the shared reward function, $\gamma$ is the discount factor for rewards, and $H$ denotes the horizon length. 

The first and second agents are controlled by neural policies $\psi_1 = (\pi_1, \phi_1)$ and $\psi_2 = (\pi_2, \phi_2)$, respectively, where the subscript indicates the agent identity. Here, $\pi$ denotes the action policy and $\phi$ denotes the communication policy. Each neural policy $\psi$ is parameterized by $\theta$. At each time step $t$, agents receive the joint observation $o^{(t)} = (o^{(t)}_1, o^{(t)}_2)$ and the previous joint message $m^{(t-1)} = (m^{(t-1)}_1, m^{(t-1)}_2)$. Each agent $i$ maintains a trajectory $\tau^{(t)}_i$ consisting of its own history up to time $t$. Conditioned on its trajectory $\tau^{(t)}_i$ and the other agent’s previous message $m^{(t-1)}_j$, agent $i$ samples its action and message as follows: $a^{(t)}_i \sim \pi_i(a_i^{(t)} \mid m^{(t-1)}_j, \tau^{(t)}_i)$ and $m^{(t)}_i \sim \phi_i(m_i^{(t)} \mid m^{(t-1)}_j, \tau^{(t)}_i)$. The joint trajectory can be written as $\tau = (o_0,a_0,m_0,r_1,o_1,...,o_{H-1}, a_{H-1}, m_{H-1}, r_{H},o_{H}) \in \mathcal{T} \equiv (\Omega \times \mathcal{A} \times \mathcal{M} \times \mathbb{R})^H$. The return of a trajectory $\tau$ is defined as $G(\tau) = \sum_{t=1}^{H} \gamma^{t-1} r_t$, and the expected return of the joint policy $(\psi_1, \psi_2)$ is $J(\psi_1,\psi_2) = \mathbb{E}_{\tau \sim p(\tau | \theta_1, \theta_2)}G(\tau)$, where $p(\tau | \theta_1, \theta_2)$ denotes the distribution over trajectories induced by the joint policy.

\paragraph{Learning to Act and Communicate with PPO} As described above, we treat the communication policy the same as the action policy, i.e., we jointly optimize them with policy gradients. This is because both policies output discrete decisions based on the agent’s observation history, and both can be optimized with similar objective functions under policy gradient methods. We use PPO \cite{schulman2017proximal} because of its effectiveness in optimizing discrete actions, sample efficiency, and good performance in multi-agent learning \cite{li2023ace, de2020independent, yu2022surprising}. We first define $r_a^{(t)}(\theta) = \frac{\pi(a^{(t)}|\tau^{(t)})}{\pi_{\text{old}}(a^{(t)}|\tau^{(t)})}$ and $r_m^{(t)}(\theta) = \frac{\phi(m^{(t)}|\tau^{(t)})}{\phi_{\text{old}}(m^{(t)}|\tau^{(t)})}$ as the ratios of action policy and communication policy \cite{schulman2017proximal}, respectively. The objective function of the neural policy $\theta$ can be written as $\mathcal{J} = \mathcal{J}_{a} + \mathcal{J}_{m} + \mathcal{J}_{\text{ent}}$. We use Generalized Advantage Estimation (GAE) \cite{schulman2015high} to estimate the advantage function. We use a single advantage function $\hat{A}^{(t)}$ estimated from the shared representation output by the LSTM, which is used for optimizing both the action and communication policies. Each agent is trained independently using standard single-agent PPO, which treats its partner as part of the environment, i.e., fully decentralized training.
The first term optimizes the action policy $\pi$ using the estimated advantage function:
\begin{equation}
 \mathcal{J}_{a} = \mathbb{E}_{t, \tau \sim p(\tau | \theta, \cdot)}\left[\min\left( r_a^{(t)}(\theta) \hat{A}^{(t)},\; \text{clip}(r_a^{(t)}(\theta), 1 - \epsilon, 1 + \epsilon) \hat{A}^{(t)} \right)\right].
\end{equation}
The second term optimizes the communication policy $\phi$ in the same manner:
\begin{equation}
    \mathcal{J}_{m} = \mathbb{E}_{t, \tau \sim p(\tau | \theta, \cdot)}\left[\min\left( r_m^{(t)}(\theta) \hat{A}^{(t)},\; \text{clip}(r_m^{(t)}(\theta), 1 - \epsilon, 1 + \epsilon) \hat{A}^{(t)} \right)\right].
\end{equation}
The final term encourages exploration by maximizing the entropy of both action and message distributions:
\begin{equation}
    \mathcal{J}_{\text{ent}} = -\mathbb{E}_{t, \tau \sim p(\tau | \theta, \cdot)}\left[ 
    \lambda_a \sum_{a} \pi(a|\tau^{(t)}) \log \pi(a|\tau^{(t)}) + 
    \lambda_m \sum_{m} \phi(m|\tau^{(t)}) \log \phi(m|\tau^{(t)}) \right].
\end{equation}

\paragraph{Neural Architecture} In \autoref{fig:agent_comm}, the agent architecture is divided into input and output components. The encoders are essential for converting heterogeneous inputs such as messages, grid observations, and spatial coordinates into a unified feature representation, enabling effective downstream processing and integration by the LSTM. On the input side, the \emph{message encoder} $\mathcal{E}_\mathcal{M}$, \emph{grid encoder} $\mathcal{E}_\mathcal{X}$, and \emph{position encoder} $\mathcal{E}_\mathcal{P}$ process the incoming message from the other agent, the grid observation, and the agent’s current position, respectively. The extracted features from these encoders are concatenated and fed into an LSTM~\cite{hochreiter1997long}, which maintains temporal memory. On the output side, the \emph{message head} $\mathcal{D}_\mathcal{M}$, \emph{action head} $\mathcal{D}_\mathcal{A}$, and \emph{value head} $\mathcal{D}_\mathcal{V}$ generate the outgoing message, the selected action, and the estimated state value, respectively. All encoders and heads are implemented as shallow multilayer perceptrons (MLPs). The message encoder $\mathcal{E}_\mathcal{M}$ includes an embedding layer that maps each vocabulary index to a real-valued vector before passing it to the MLP.

\subsection{Model Training}
\label{appendix: model_training}
We train our models on a single GeForce RTX 4090. However, due to the small model size and low environment complexity, training can also be run entirely on a CPU as well. For populations smaller than 3, training takes approximately 16 hours, depending on the CPU and the number of available cores. The longest training time occurs with a population size of 15 and takes up to 3 days. All experiments are repeated with three different random seeds. We use the ADAM optimizer \cite{kingma2014adam} with an initial learning rate of $0.00025$, which linearly decays over time. We report the hyperparameters of the architecture and training algorithm in \autoref{tab:neural_arch} and \autoref{tab:ppo_hyperparams}. The examples of learning dynamics for \emph{ScoreG} and \emph{TemporalG} are shown in \autoref{fig:pickup_high_learning_dynamics} and \autoref{fig:pickup_temp_learning_dynamics}.

\begin{table}[ht]
\centering
\caption{Neural architecture hyperparameters. \\}
\label{tab:neural_arch}
\begin{tabular}{ll}
\toprule
\textbf{Component} & \textbf{Hyperparameter} \\
\midrule
Visual encoder & 4-layer MLP: [256, 256, 128, 16] \\
Visual input shape & $(1, 5, 5)$ grayscale grid \\
Position encoder & Linear(2, 4) \\
Message embedding size & 16 \\
Message encoder & Embedding(Vocab. Size, 16) + Linear(16, 16) \\
LSTM hidden size & 128 \\
LSTM input size & $16$ (visual) $+$ $4$ (position) $+$ $16$ (message) = 36 \\
Action head & Linear(128, \texttt{num\_actions}) \\
Value head & Linear(128, 1) \\
Message head & Linear(128, 16) \\
Weight initialization & Orthogonal (std = $\sqrt{2}$), output layers: std = 0.01 \\
\bottomrule
\end{tabular}
\end{table}

\begin{table}[ht]
\centering
\caption{PPO hyperparameters. \\}
\label{tab:ppo_hyperparams}
\begin{tabular}{ll}
\toprule
\textbf{Hyperparameter} & \textbf{Value} \\
\midrule
Total time steps & $2 \times 10^9$ \\
Learning rate & $2.5 \times 10^{-4}$ \\
Number of environments & 128 \\
Number of steps per rollout & 32 \\
Number of minibatches & 4 \\
Number of update epochs & 4 \\
Discount factor $\gamma$ & 0.99 \\
GAE $\lambda$ & 0.95 \\
Clip coefficient & 0.1 \\
Clip value loss & True \\
Normalize advantages & True \\
Action entropy coefficient & 0.01 \\
Message entropy coefficient & 0.002 \\
Value function coefficient & 0.5 \\
Max gradient norm & 0.5 \\
Learning rate annealing & True \\
Target KL divergence & None \\
Batch size & 4096 \\
Minibatch size & 1024 \\
Optimizer & Adam ($\epsilon = 10^{-5}$) \\
\bottomrule
\end{tabular}
\end{table}

\begin{figure}[ht]
    \centering

    \begin{subfigure}[b]{0.32\linewidth}
         \centering
        \includegraphics[trim=5 10 5 5, clip, width=\linewidth]{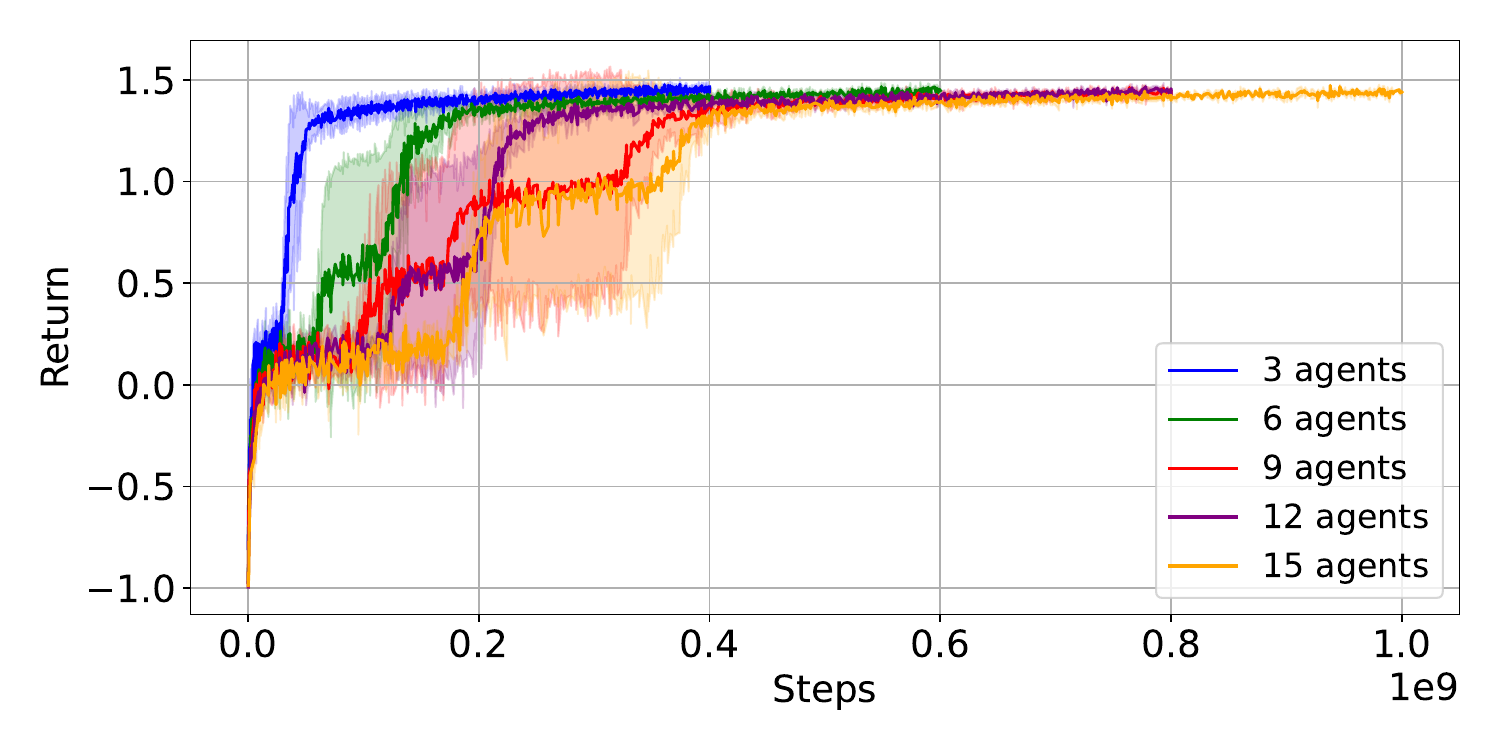}
        \caption{Episodic return}
        \label{fig:pickup_high_return}
    \end{subfigure}
    \begin{subfigure}[b]{0.32\linewidth}
        \centering
        \includegraphics[trim=5 10 5 5, clip, width=\linewidth]{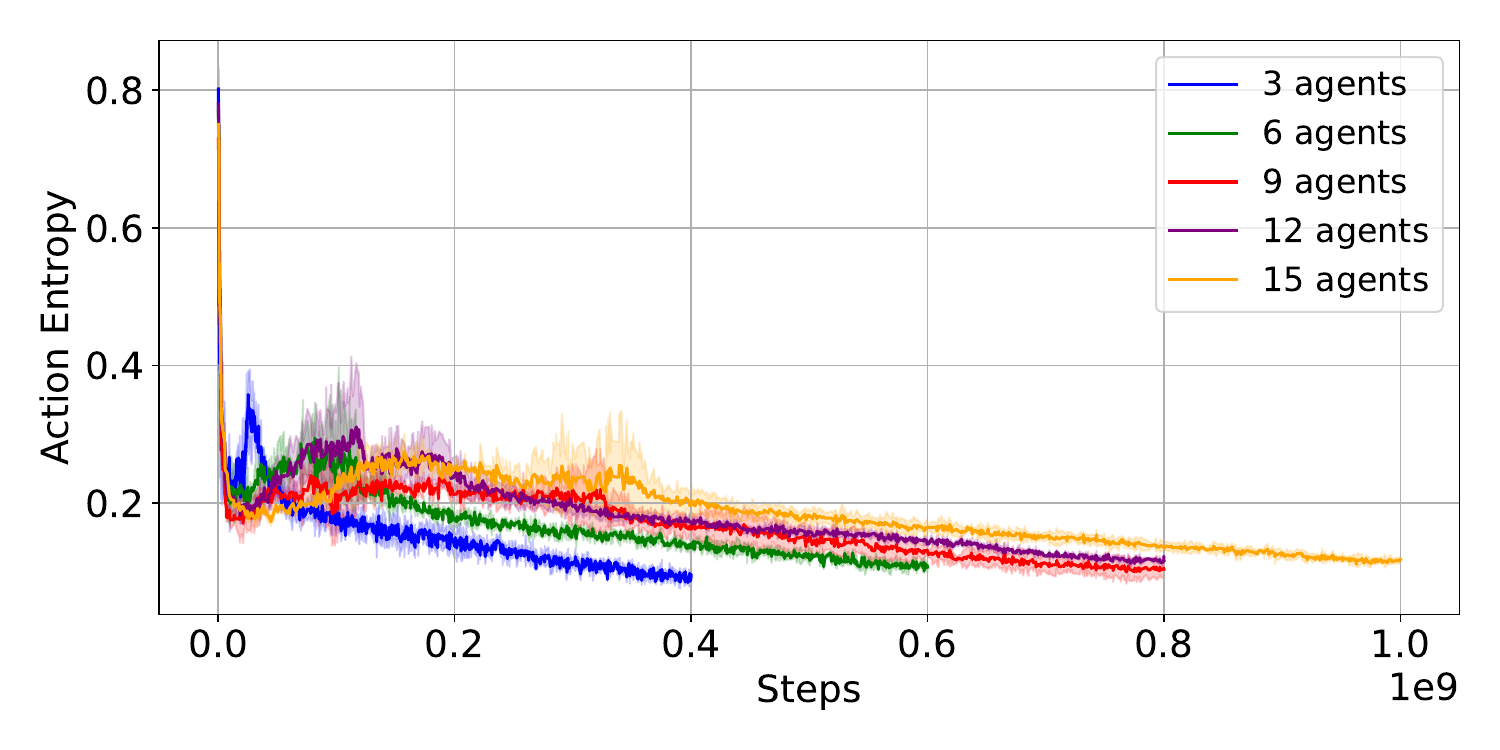}
        \caption{Action entropy}
        \label{fig:pickup_high_action_entropy}
    \end{subfigure}
    \begin{subfigure}[b]{0.32\linewidth}
        \centering
        \includegraphics[trim=5 10 5 5, clip, width=\linewidth]{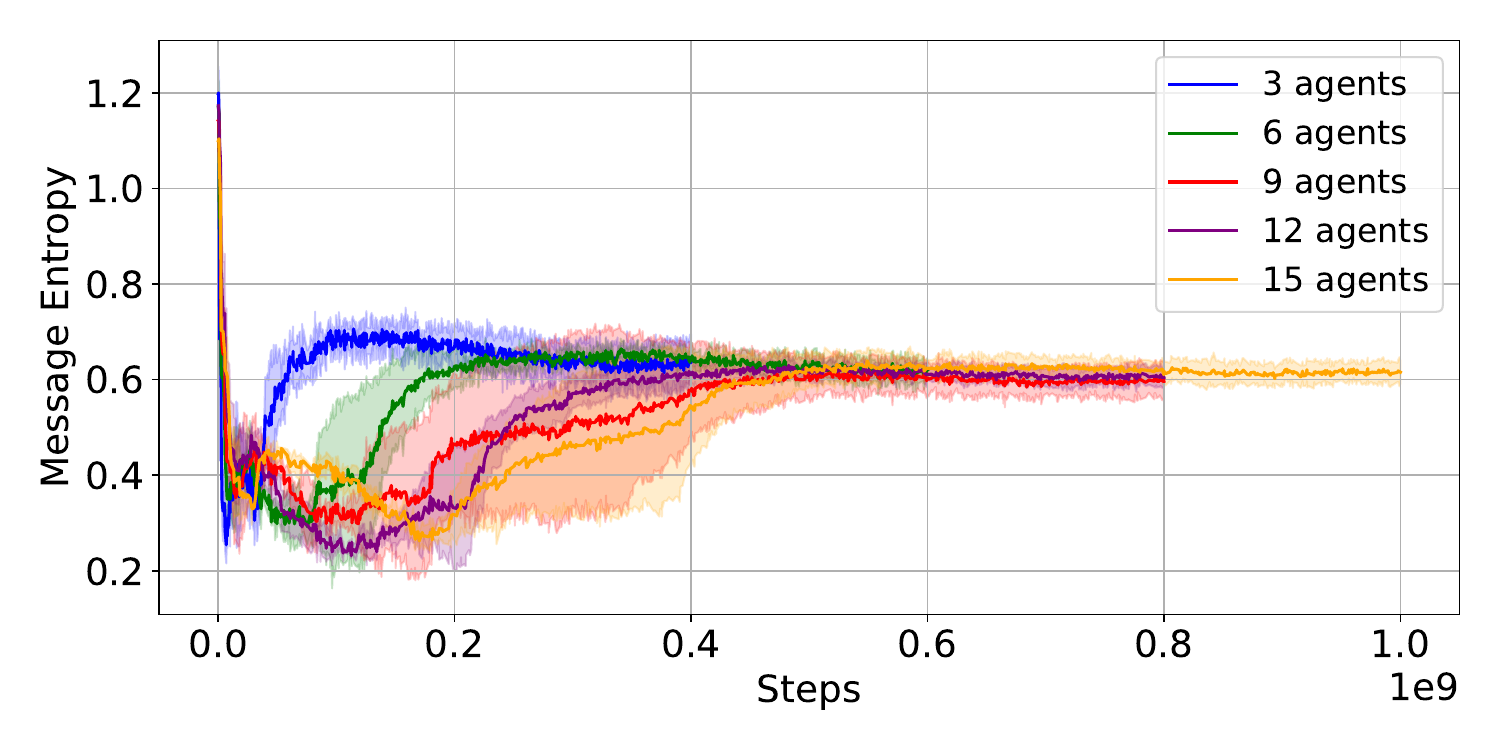}
        \caption{Message entropy}
        \label{fig:pickup_high_message_entropy}
    \end{subfigure}

    \caption{\textbf{Larger population takes longer time to converge.} The figure shows learning dynamics of agents with different population sizes in \emph{ScoreG}. Return, action entropy, and message entropy are plotted over environment steps, averaged across three seeds. A single step is defined as a single interaction with the environment.}
    \label{fig:pickup_high_learning_dynamics}
\end{figure}

\begin{figure}[ht]
    \centering

    \begin{subfigure}[b]{0.32\linewidth}
         \centering
        \includegraphics[trim=5 10 5 5, clip, width=\linewidth]{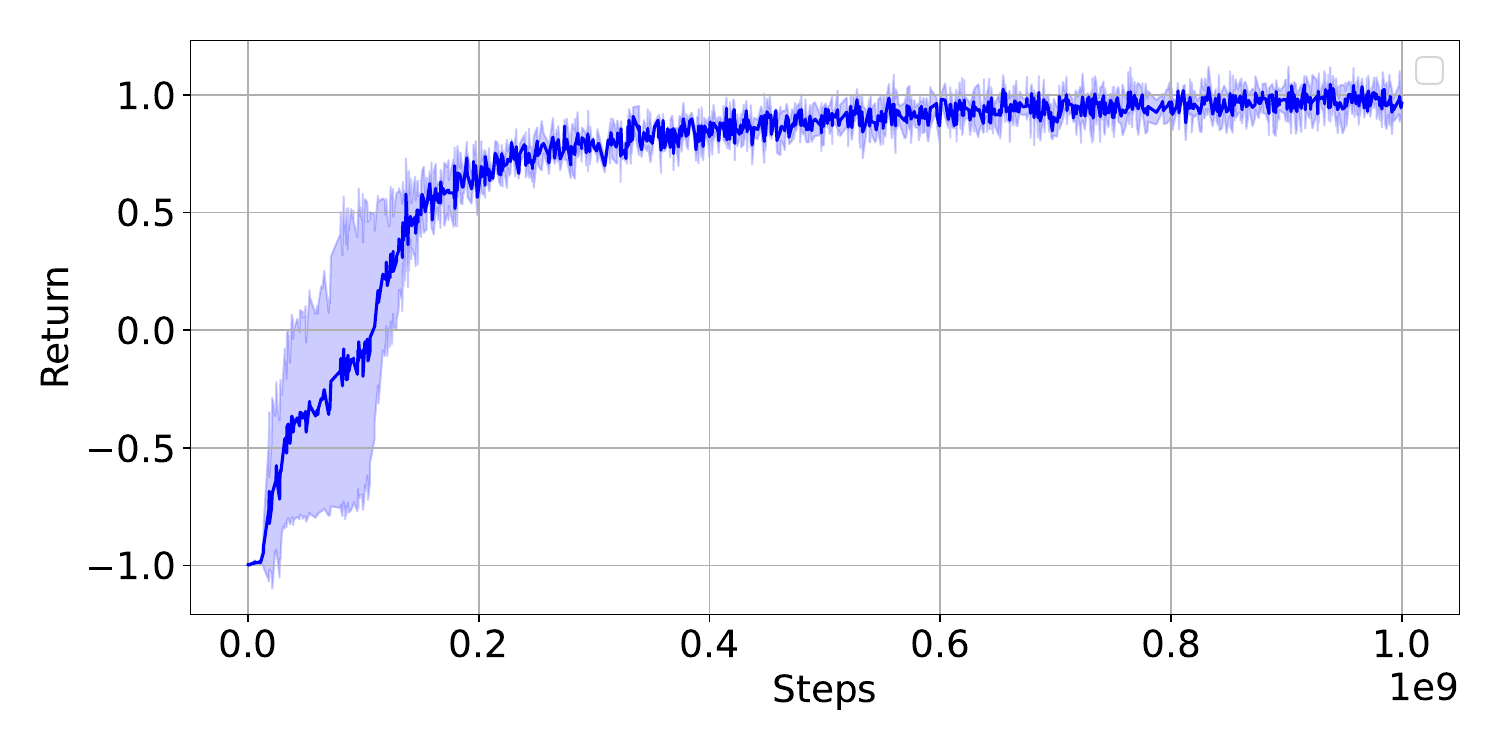}
        \caption{Episodic return}
        \label{fig:pickup_temp_return}
    \end{subfigure}
    \begin{subfigure}[b]{0.32\linewidth}
        \centering
        \includegraphics[trim=5 10 5 5, clip, width=\linewidth]{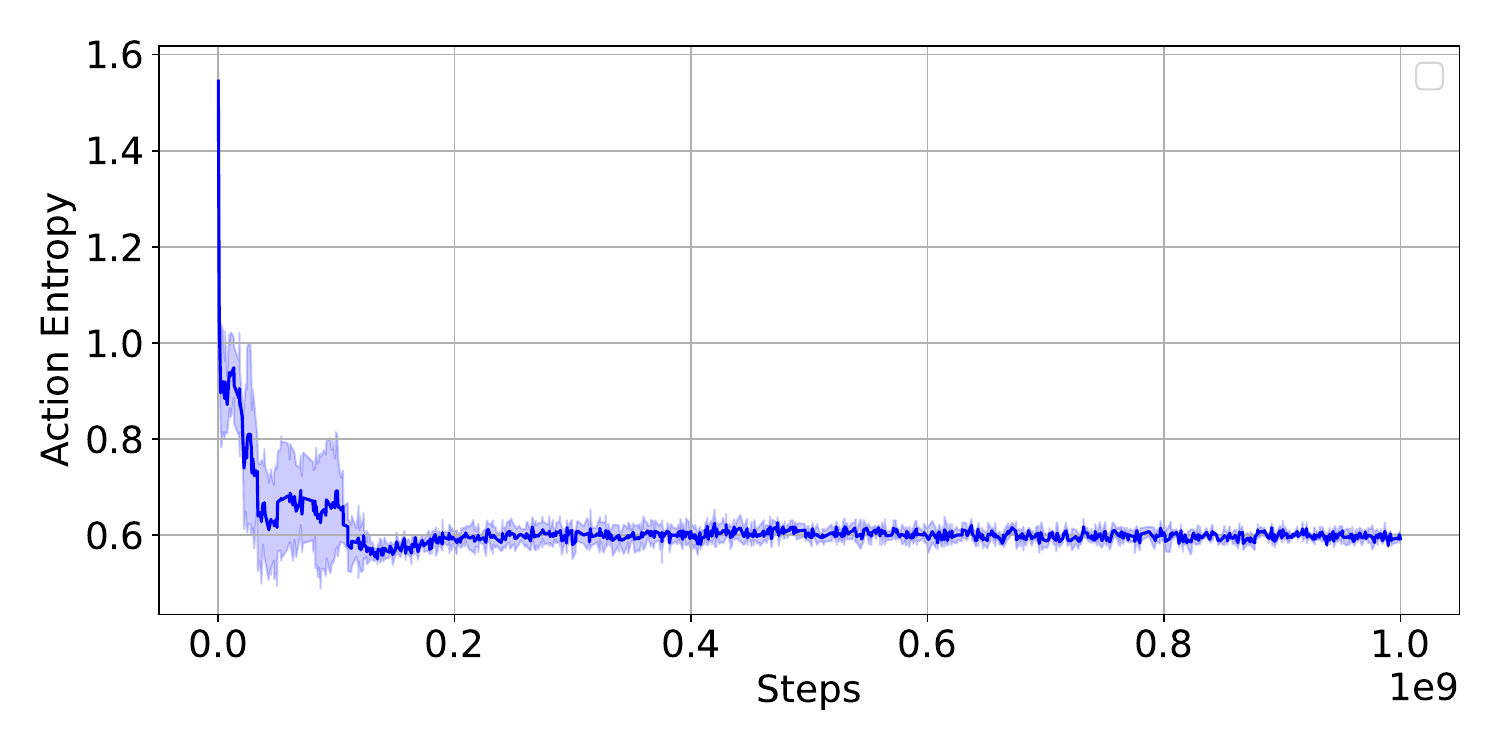}
        \caption{Action entropy}
        \label{fig:pickup_temp_action_entropy}
    \end{subfigure}
    \begin{subfigure}[b]{0.32\linewidth}
        \centering
        \includegraphics[trim=5 10 5 5, clip, width=\linewidth]{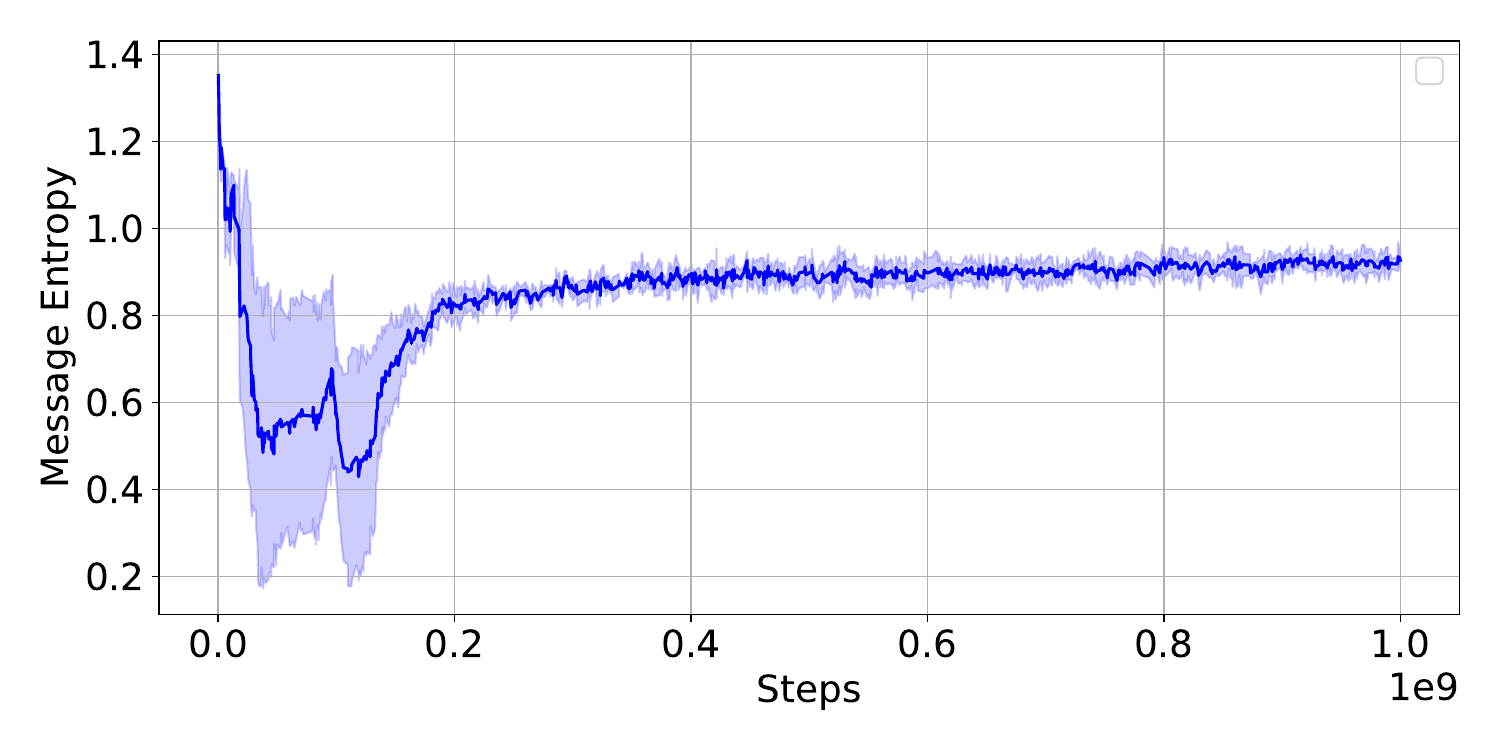}
        \caption{Message entropy}
        \label{fig:pickup_temp_message_entropy}
    \end{subfigure}

    \caption{\textbf{Learning dynamics of three XP agents in \emph{TemporalG.}} Agents start to converge at around one billion environment steps.}
    \label{fig:pickup_temp_learning_dynamics}
\end{figure}

\subsection{Metrics}
\label{appendix:metrics}
\paragraph{Topographic Similarity (topsim)} 
Topographic similarity measures the alignment between the structure of the message space and the semantic space (e.g., environmental states or properties). It is defined as the Spearman rank correlation between all pairwise distances in these two spaces \cite{brighton2006understanding, lazaridou2018emergence}. A higher \emph{topsim} indicates a more compositional and consistent mapping between semantic meanings and messages. We define the semantic space using ground-truth attributes of the environment, specifically the scores and positions of items in the \textit{ScoreG}. To implement the topographic similarity metric, we use the EGG toolkit \cite{kharitonov2019egg}.

Let \( M = \{m_1, m_2, \dots, m_N\} \) be the set of message sequences, and \( S = \{s_1, s_2, \dots, s_N\} \) be the corresponding semantic meanings. Let \( \delta_M(m_i, m_j) \) and \( \delta_S(s_i, s_j) \) denote the pairwise distance functions in the message and semantic spaces, respectively. Then:

\begin{equation}
    \text{topsim} = \mathrm{Spearman}\left(
    \left\{ \delta_M(m_i, m_j) \right\}_{i<j}, 
    \left\{ \delta_S(s_i, s_j) \right\}_{i<j}
    \right)
\end{equation}

A high topographic similarity indicates that similar messages correspond to similar semantic meanings, reflecting a structured and grounded communication protocol. We use items' positions and scores as semantic meanings.

%\textbf{Language Similarity (\emph{LS})} measures how similarly two agents communicate in the same situation. It compares the sequences of discrete messages each agent produces and computes the average agreement between them. A higher score means the agents tend to use the same messages in similar contexts, indicating stronger convergence in their communication strategies.

\paragraph{Language Similarity (LS)} quantifies the token-level similarity between two agents based on their communication over multiple episodes, starting from the same initial condition. A higher score means the agents tend to use the same messages in similar contexts, indicating stronger convergence in their communication strategies. For a given episode \( e \), the agents \( i \) and \( j \) produce their respective message sequences \( M^{(e)}_{i} \) and \( M^{(e)}_{j} \). Let \( \mathcal{D}_{\text{edit}}(\cdot) \) represent the normalized edit distance function, and let \( N_e \) denote the total number of evaluation episodes. The Language Similarity (LS) between agents \( i \) and \( j \) is defined as:

\begin{equation}
    \text{LS(i,j)} = \frac{1}{N_e}\sum_{e=1}^{N_{e}}1-\mathcal{D}_{\text{edit}}(M^{(e)}_{i},M^{(e)}_{j})
\end{equation}

We can then compute the average LS across all pairs of agents in the entire population as follows: $\text{LS} = \frac{1}{N_\text{pop}(N_\text{pop}-1)}\sum_{i=1}^{N_\text{pop}}\sum_{j \neq i}^{N_\text{pop}}\text{LS}(i,j)$.

\paragraph{Interchangeability (IC)} refers to a property of language wherein a speaker can both send and understand the same linguistic signals \cite{hockett1960origin}. In the context of agents, this means that an agent should understand the language it produces.  
To evaluate interchangeability in agents, we embed the same neural network in two different agent bodies and assess their performance in the game. We compare an agent's success when paired with itself to its success when paired with other agents.  
Formally, consider a set of $N_\text{pop}$ agents. Let $\text{SR}(i,j)$ denote the success rate of an agent $i$ when playing with an agent $j$. Therefore, we propose that interchangeability (IC) can be defined as:
\begin{equation}
    \text{IC} = (N_\text{pop} - 
 1) \times \frac{\sum_{i=1}^{N_\text{pop}} \text{SR}(i,i)}{\sum_{i=1}^{N_\text{pop}}\sum_{j \neq i}^{N_\text{pop}} \text{SR}(i,j)}
\end{equation}

\section{Theoretical Analysis}
\label{sec:theory}
We explain why three cross-play (XP) agents and two cross-play-and-self-play (XP+SP) agents tend to develop a shared language. We begin with the assumption that converged agents achieve successful coordination when their partners’ languages are compatible with them. Intuitively, if an agent is paired with a partner whose language differs more from its compatible language, their joint performance will decline.

\begin{assumption}[Coordination depends on language compatibility]
\label{ass:ls_predicts_return}
Let $\psi_1$ and $\psi_2$ be agents that achieve near-optimal performance, i.e., $(1 - \epsilon) J^* \leq J(\psi_1, \psi_2) \leq J^*$, where $J^*$ is the optimal return and $\epsilon \geq 0$ is small. Let $\psi_3$ and $\psi_4$ be alternative partners interacting with $\psi_1$. If 
\[
LS(\psi_2, \psi_3) > LS(\psi_2, \psi_4),
\]
then the joint returns satisfy:
\[
J(\psi_1, \psi_2) \geq J(\psi_1, \psi_3) \geq J(\psi_1, \psi_4).
\]
\end{assumption}

To formalize what it means for two agents to use a similar language, we introduce the following definition based on language similarity (\emph{LS}) and its relationship to joint performance.

\begin{definition}[$\epsilon$-language threshold w.r.t. a fixed partner]
\label{def:delta_l_fixed}
Let $\psi_{\text{eval}}$ and $\psi_{\text{ref}}$ be two agents such that 
\[
J(\psi_{\text{eval}}, \psi_{\text{ref}}) \geq (1 - \epsilon) J^*.
\]
Define the language compatibility threshold $\delta_l(\psi_{\text{eval}}, \psi_{\text{ref}}, \epsilon)$ as:
\[
\delta_l(\psi_{\text{eval}}, \psi_{\text{ref}}, \epsilon) := \min_{\psi'}\ LS(\psi_{\text{ref}}, \psi') \quad \text{s.t.} \quad J(\psi_{\text{eval}}, \psi') \geq (1 - \epsilon) J^*.
\]

This is the minimal language similarity (with respect to $\psi_{\text{ref}}$) that a new partner $\psi'$ must have in order to maintain near-optimal return when paired with $\psi_{\text{eval}}$.
\end{definition}

Under this setup, we can now show that language similarity within a population of XP agents ($N_\text{pop}\geq3$) is lower-bounded. This result ensures that if all agent pairs achieve near-optimal returns, their learned languages cannot be arbitrarily different.

\begin{theorem}[Language convergence in cross-play (XP) population]
\label{thm:pop_similarity}
Let $\psi_i, \psi_j, \psi_k$ be any three distinct agents from a co-trained population with $N_\text{pop}\geq3$. Suppose their pairwise joint returns are bounded:
\[
(1 - \epsilon) J^* \leq J(\psi_i, \psi_j) \leq J^*, \quad
(1 - \epsilon) J^* \leq J(\psi_j, \psi_k) \leq J^*, \quad
(1 - \epsilon) J^* \leq J(\psi_i, \psi_k) \leq J^*,
\]
for some small $\epsilon \geq 0$.

Then the language similarity between $\psi_i$ and $\psi_j$ is lower bounded as:
\[
LS(\psi_i, \psi_j) \geq \delta_l(\psi_k, \psi_i, \epsilon),
\]
where $\delta_l(\psi_k, \psi_i, \epsilon)$ is the minimum language similarity to $\psi_i$ required to achieve at least $(1 - \epsilon) J^*$ when paired with $\psi_k$.
\end{theorem}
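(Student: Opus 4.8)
Proof proposal.

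The plan is to read the conclusion almost directly off \autoref{def:delta_l_fixed}, using the three return bounds to (i) certify that the threshold $\delta_l(\psi_k,\psi_i,\epsilon)$ is well-posed and (ii) exhibit $\psi_j$ as a feasible competitor in the minimization that defines it. First I would instantiate \autoref{def:delta_l_fixed} with $\psi_{\text{eval}}=\psi_k$ and $\psi_{\text{ref}}=\psi_i$. This requires the precondition $J(\psi_k,\psi_i)\geq (1-\epsilon)J^*$, which is exactly the third hypothesis of the theorem (using symmetry of $J$ in its two policy arguments). Hence the threshold is defined and equals the minimum of $LS(\psi_i,\psi')$ over all partners $\psi'$ satisfying $J(\psi_k,\psi')\geq(1-\epsilon)J^*$.

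Next I would observe that $\psi_j$ itself meets that constraint: the second hypothesis gives $J(\psi_j,\psi_k)\geq(1-\epsilon)J^*$, so $\psi_j$ lies in the feasible set $\{\psi' : J(\psi_k,\psi')\geq(1-\epsilon)J^*\}$. Since $LS(\psi_i,\psi_j)$ is therefore the value of the objective $LS(\psi_i,\cdot)$ at a feasible point, it is at least the minimum of that objective over the feasible set, i.e. $LS(\psi_i,\psi_j)\geq \delta_l(\psi_k,\psi_i,\epsilon)$, which is the claim. The first hypothesis $J(\psi_i,\psi_j)\geq(1-\epsilon)J^*$ is not needed for the bare inequality but is what makes the statement meaningful — it says $\psi_i$ and $\psi_j$ genuinely coordinate well — and it, together with the others, guarantees the feasible set is nonempty (it already contains $\psi_i$). \autoref{ass:ls_predicts_return} underpins this picture by tying near-optimal joint return to language compatibility, so that the minimizer in \autoref{def:delta_l_fixed} is realized by an actual co-trained agent rather than an ill-defined limit.

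The only real subtlety — and the step I would be most careful about — is matching the domain of the minimization in \autoref{def:delta_l_fixed} to the set in which $\psi_j$ sits: one must check there is no implicit side condition (for instance, that $\psi'$ be a converged or co-trained agent) that would disqualify $\psi_j$; since $\psi_j$ is drawn from the same co-trained population of size $N_{\text{pop}}\geq 3$ as $\psi_i$ and $\psi_k$, it clearly qualifies. A minor technical point is the attainment of the minimum: if the policy space is not compact I would simply read $\min$ as $\inf$ throughout \autoref{def:delta_l_fixed} and \autoref{thm:pop_similarity}, which leaves the inequality untouched. Once this bookkeeping is dispatched, the proof is a one-line consequence of \autoref{def:delta_l_fixed} applied with $\psi_j$ as a feasible witness.
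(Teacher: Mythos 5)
Your proposal is correct and takes essentially the same route as the paper: the paper instantiates Definition~\ref{def:delta_l_fixed} with $\psi_{\text{eval}}=\psi_k$, $\psi_{\text{ref}}=\psi_i$ and argues by contradiction (assuming $LS(\psi_i,\psi_j)<\delta_l(\psi_k,\psi_i,\epsilon)$ forces $J(\psi_k,\psi_j)<(1-\epsilon)J^*$, contradicting the hypothesis), which is exactly the contrapositive of your direct ``$\psi_j$ is a feasible witness in the minimization'' argument. Your side remarks — that only the bounds involving $\psi_k$ are needed for the bare inequality, and that $\min$ should be read as $\inf$ if attainment is in doubt — are accurate refinements rather than gaps.
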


In the same way, we can simply show that two agents trained under XP+SP have lower-bounded language similarity to maintain high task performance.

\begin{theorem}[Language convergence in cross-and-self-play (XP+SP) agents]
\label{thm:xpsp_similarity}
Let $\psi_1$ and $\psi_2$ be co-trained agents under a cross-play and self-play (XP+SP) regime, and let $J^*$ denote the optimal achievable return in the environment. Suppose both agents achieve near-optimal return when playing with each other and with themselves:
\[
(1 - \epsilon) J^* \leq J(\psi_1, \psi_2) \leq J^*, \quad
(1 - \epsilon) J^* \leq J(\psi_1, \psi_1) \leq J^*, \quad
(1 - \epsilon) J^* \leq J(\psi_2, \psi_2) \leq J^*,
\]
for some small $\epsilon \geq 0$.

Then their communication protocols must be similar:
\[
LS(\psi_1, \psi_2) \geq \max\left( \delta_l(\psi_1, \psi_1, \epsilon),\ \delta_l(\psi_2, \psi_2, \epsilon) \right),
\]
where $\delta_l(\psi_i, \psi_i, \epsilon)$ is the minimum language similarity to $\psi_i$ required to maintain return at least $(1-\epsilon) J^*$ when paired with $\psi_i$.
\end{theorem}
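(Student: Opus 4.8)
The plan is to derive Theorem \ref{thm:xpsp_similarity} as a direct specialization of the machinery already set up for Theorem \ref{thm:pop_similarity}, treating the self-play copies $\psi_1$ playing with $\psi_1$ and $\psi_2$ playing with $\psi_2$ as the ``third agent'' that pins down the language. First I would fix attention on $\psi_1$ and observe that, by hypothesis, $J(\psi_1,\psi_1) \geq (1-\epsilon)J^*$, so the pair $(\psi_1,\psi_1)$ satisfies the near-optimality precondition of Definition \ref{def:delta_l_fixed} with $\psi_{\text{eval}} = \psi_{\text{ref}} = \psi_1$. Hence $\delta_l(\psi_1,\psi_1,\epsilon)$ is well-defined: it is the minimal language similarity to $\psi_1$ that any partner $\psi'$ must exhibit in order to keep $J(\psi_1,\psi')$ at least $(1-\epsilon)J^*$.

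Next I would apply this threshold to the actual partner $\psi_2$. Since $J(\psi_1,\psi_2) \geq (1-\epsilon)J^*$ by assumption, $\psi_2$ is one of the feasible partners $\psi'$ in the minimization defining $\delta_l(\psi_1,\psi_1,\epsilon)$; therefore $LS(\psi_1,\psi_2) \geq \delta_l(\psi_1,\psi_1,\epsilon)$. By the symmetric argument, starting instead from $\psi_2$'s self-play copy and using $J(\psi_2,\psi_2) \geq (1-\epsilon)J^*$ together with $J(\psi_2,\psi_1) \geq (1-\epsilon)J^*$, I get $LS(\psi_2,\psi_1) \geq \delta_l(\psi_2,\psi_2,\epsilon)$. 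Assuming $LS$ is symmetric (which it is, being an average agreement between two message sequences), the two inequalities combine to $LS(\psi_1,\psi_2) \geq \max\bigl(\delta_l(\psi_1,\psi_1,\epsilon),\ \delta_l(\psi_2,\psi_2,\epsilon)\bigr)$, which is exactly the claimed bound.

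The main obstacle I anticipate is not the algebra, which is essentially a one-line unpacking of the definition, but rather justifying that the minimization in Definition \ref{def:delta_l_fixed} is over a set that actually contains $\psi_2$ and that the ``$\min$'' is attained (or at least that the infimum is a valid lower bound) — this is immediate here because $\psi_2$ is itself feasible, so I would just note that the feasible set is nonempty and $LS(\psi_1,\psi_2)$ is an upper bound for the infimum only in the wrong direction, so I must be careful to phrase it as: $\delta_l(\psi_1,\psi_1,\epsilon) = \min_{\psi'} LS(\psi_1,\psi')$ over feasible $\psi'$, and since $\psi_2$ is feasible, $LS(\psi_1,\psi_2)$ is one of the quantities being minimized over, hence $\geq$ the min. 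A secondary subtlety is whether Assumption \ref{ass:ls_predicts_return} is even needed; for this theorem it is not — the result follows purely from Definition \ref{def:delta_l_fixed} and the near-optimality hypotheses — so I would simply remark that unlike Theorem \ref{thm:pop_similarity}, the self-play case does not require the monotonicity assumption, making it the cleaner of the two. I would close by noting the interpretation: self-play forces each agent's language to lie within its own compatibility ball, and since both agents also coordinate with each other, their shared language must satisfy both balls simultaneously.
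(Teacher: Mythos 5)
Your proof is correct and in substance identical to the paper's: the paper establishes the same bound by contradiction (supposing $LS(\psi_1,\psi_2)$ falls below the larger of the two thresholds and deriving $J(\psi_1,\psi_2) < (1-\epsilon)J^*$, contradicting the hypothesis), which is exactly the contrapositive of your direct observation that $\psi_2$ lies in the feasible set of the minimization defining $\delta_l(\psi_1,\psi_1,\epsilon)$ (and symmetrically for $\psi_2$). Your side remarks also align with the paper's proof, which likewise uses only Definition~\ref{def:delta_l_fixed} and the near-optimality hypotheses, not Assumption~\ref{ass:ls_predicts_return}, and implicitly relies on the same symmetry of $LS$ and $J$ via its ``without loss of generality'' step.
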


\begin{proof}[\textbf{Proof of Theorem~\ref{thm:pop_similarity}}]
Assume for contradiction that
\[
LS(\psi_i, \psi_j) < \delta_l(\psi_k, \psi_i, \epsilon).
\]

By definition of $\delta_l$ (Definition~\ref{def:delta_l_fixed}), this means that $\psi_j$ is less similar to $\psi_i$ than any agent known to maintain near-optimal return with $\psi_k$ when $\psi_i$ is the reference.

We consider that $\psi_k$ is the fixed agent (evaluator), $\psi_i$ is a known successful partner of $\psi_k$, and $\psi_j$ is a new partner.

Since $LS(\psi_i, \psi_j) < \delta_l(\psi_k, \psi_i, \epsilon)$, then:
\[
J(\psi_k, \psi_j) < (1 - \epsilon) J^*.
\]

This contradicts the near-optimal return condition that $J(\psi_k, \psi_j) \geq (1 - \epsilon) J^*$. Therefore,

\[
LS(\psi_i, \psi_j) \geq \delta_l(\psi_k, \psi_i, \epsilon).
\]
\end{proof}

\begin{proof}[\textbf{Proof of Theorem~\ref{thm:xpsp_similarity}}]
Assume for contradiction that:
\[
LS(\psi_1, \psi_2) < \max\left( \delta_l(\psi_1, \psi_1, \epsilon),\ \delta_l(\psi_2, \psi_2, \epsilon) \right).
\]

Without loss of generality, assume the maximum is achieved by $\psi_1$, so:
\[
LS(\psi_1, \psi_2) < \delta_l(\psi_1, \psi_1, \epsilon).
\]

From Definition~\ref{def:delta_l_fixed}, this implies that $\psi_2$ is less similar to $\psi_1$ than any agent that can achieve return $\geq (1 - \epsilon) J^*$ when paired with $\psi_1$.

We use $\psi_1$ as the fixed evaluator, $\psi_1$ and itself are known to perform well: $J(\psi_1, \psi_1) \geq (1 - \epsilon) J^*$, and $\psi_2$ is a new partner being evaluated based on its similarity to $\psi_1$.

Since $LS(\psi_1, \psi_2) < \delta_l(\psi_1, \psi_1, \epsilon)$, then
\[
J(\psi_1, \psi_2) < (1 - \epsilon) J^*,
\]
which contradicts the near-optimal return condition that $J(\psi_1, \psi_2) \geq (1 - \epsilon) J^*$.

Therefore,
\[
LS(\psi_1, \psi_2) \geq \max\left( \delta_l(\psi_1, \psi_1, \epsilon),\ \delta_l(\psi_2, \psi_2, \epsilon) \right).
\]
\end{proof}

\section{Additional Experiments and Results on
%Social Networks with Small-World Property Enhance 
Cultural Transmission}
\label{appendix:cultural_tranmission}
\begin{figure}[ht]
    \centering
    \begin{subfigure}{0.14\textwidth}
        \centering
        \includegraphics[trim=10 20 10 10, clip, width=\textwidth]{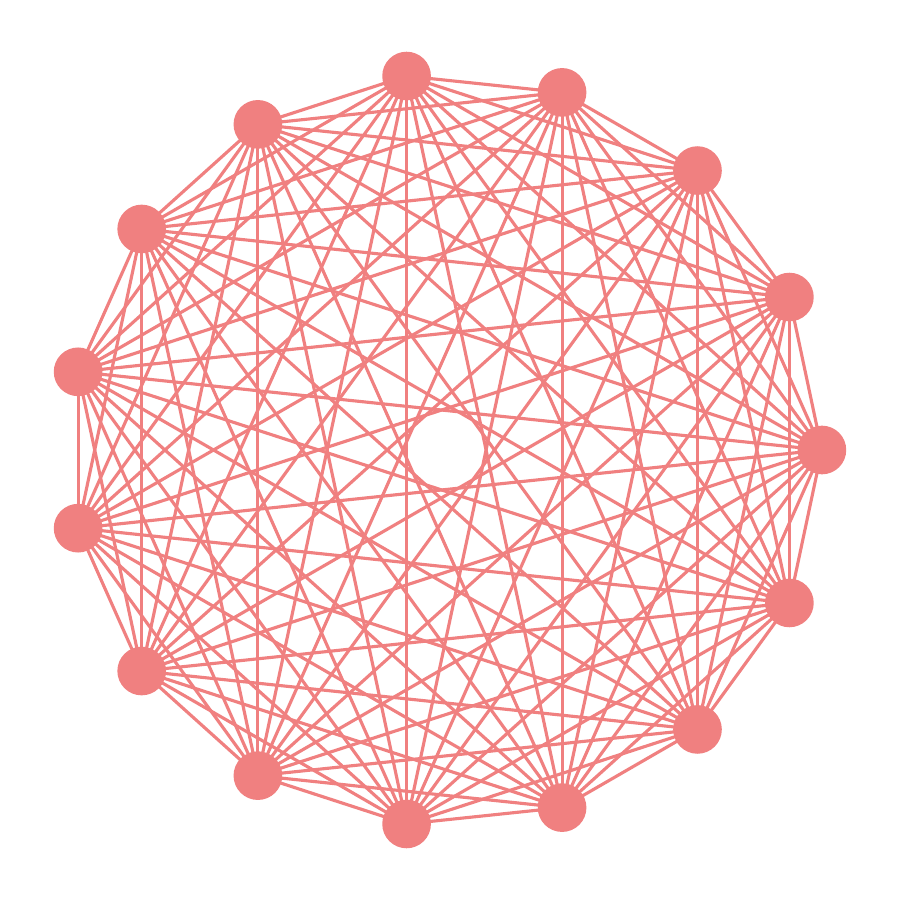}
        \vspace{-4mm}
        \caption{FC}
        \label{fig:social_fc}
    \end{subfigure}
    \begin{subfigure}{0.14\textwidth}
        \centering
        \includegraphics[trim=10 20 10 10, clip, width=\textwidth]{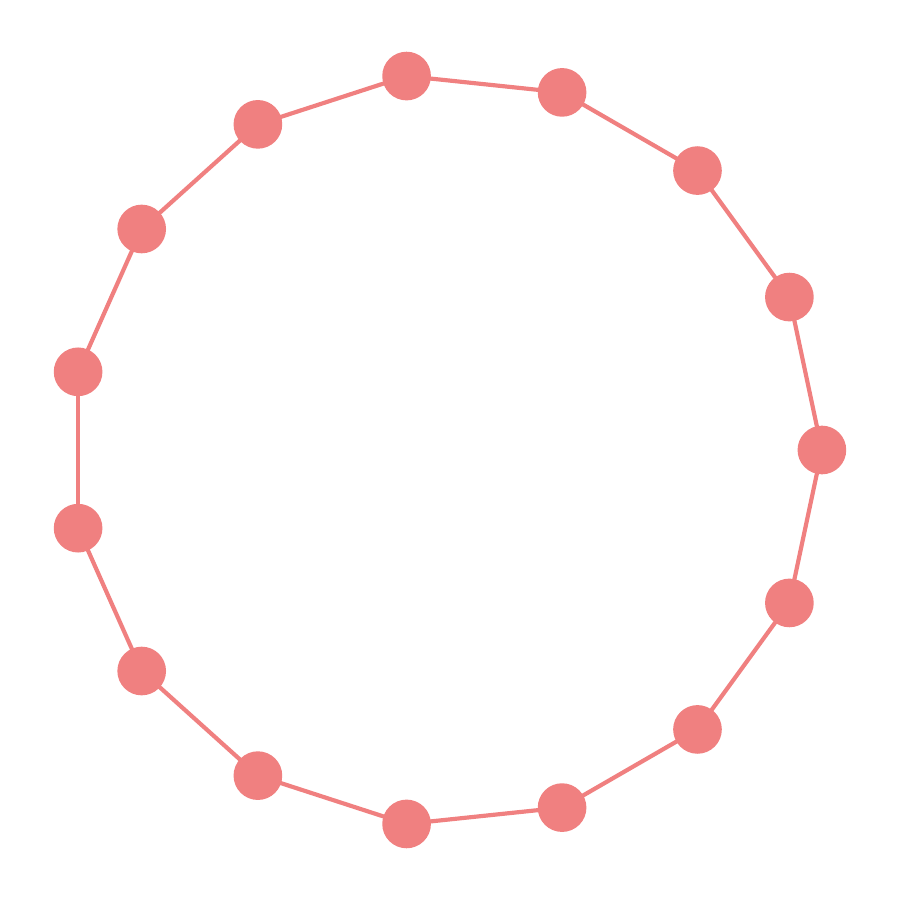}
        \vspace{-4mm}
        \caption{Ring}
        \label{fig:social_ring}
    \end{subfigure}
    \begin{subfigure}{0.14\textwidth}
        \centering
        \includegraphics[trim=10 20 10 10, clip, width=\textwidth]{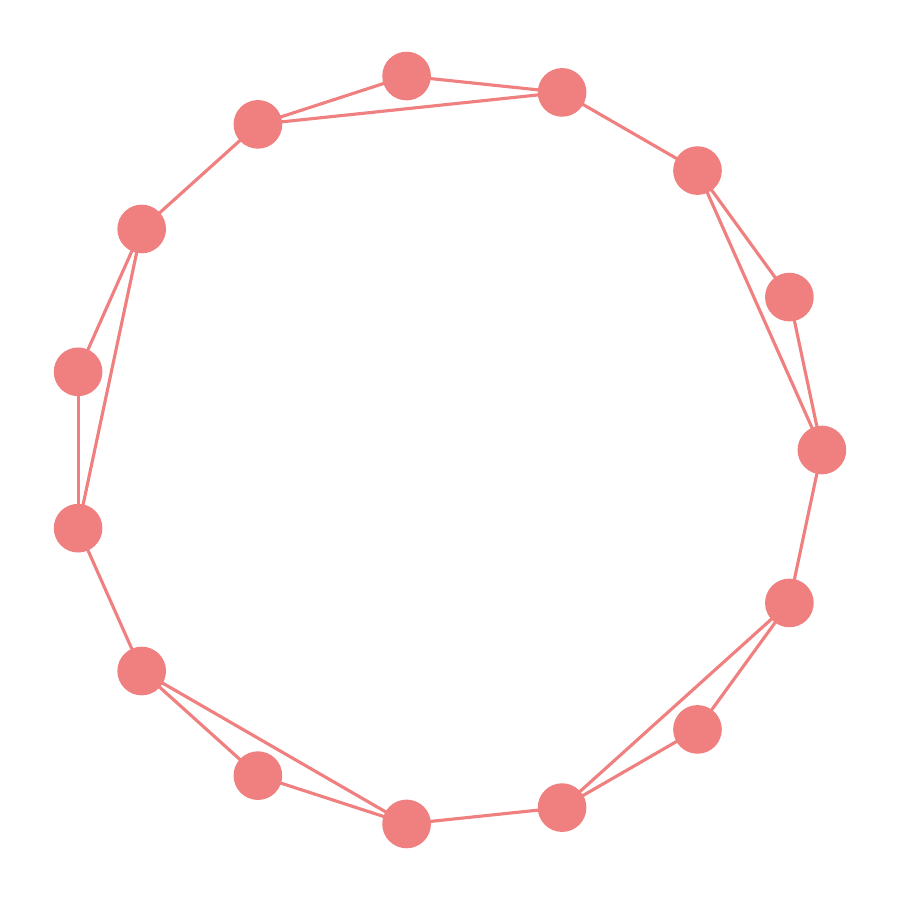}
        \vspace{-4mm}
        \caption{Clq}
        \label{fig:social_cc}
    \end{subfigure}
    \begin{subfigure}{0.14\textwidth}
        \centering
        \includegraphics[trim=10 20 10 10, clip, width=\textwidth]{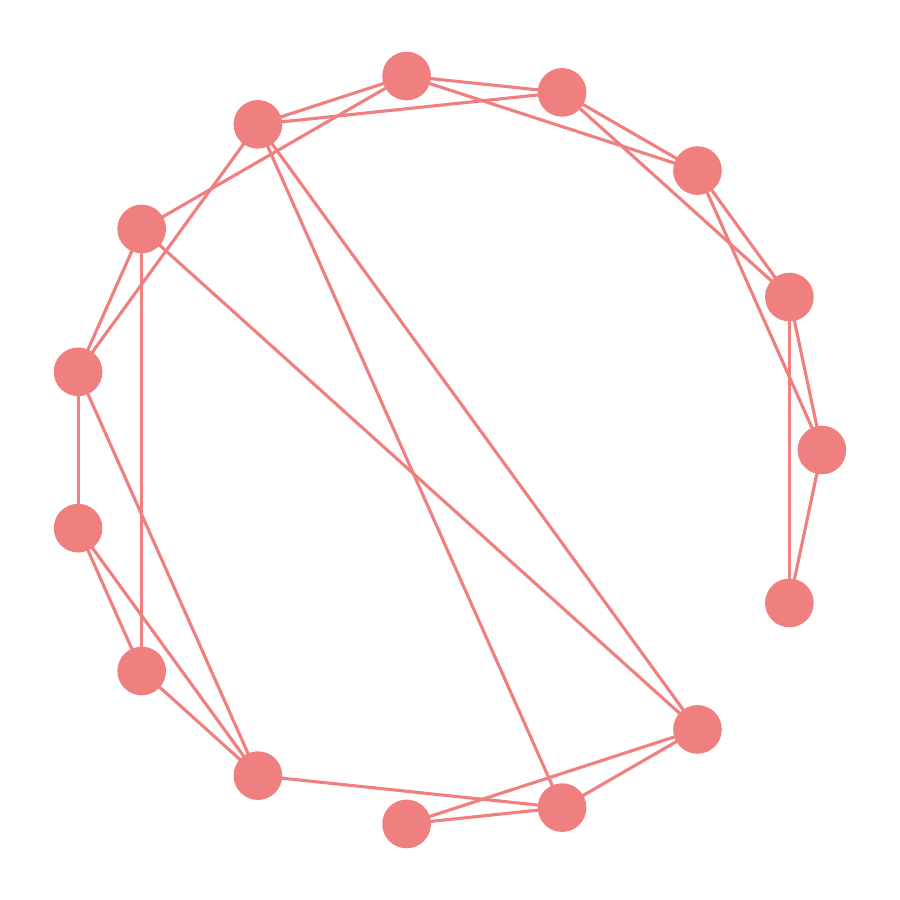}
        \vspace{-4mm}
        \caption{WS}
        \label{fig:social_ws}
    \end{subfigure}
    \begin{subfigure}{0.14\textwidth}
        \centering
        \includegraphics[trim=10 20 10 10, clip, width=\textwidth]{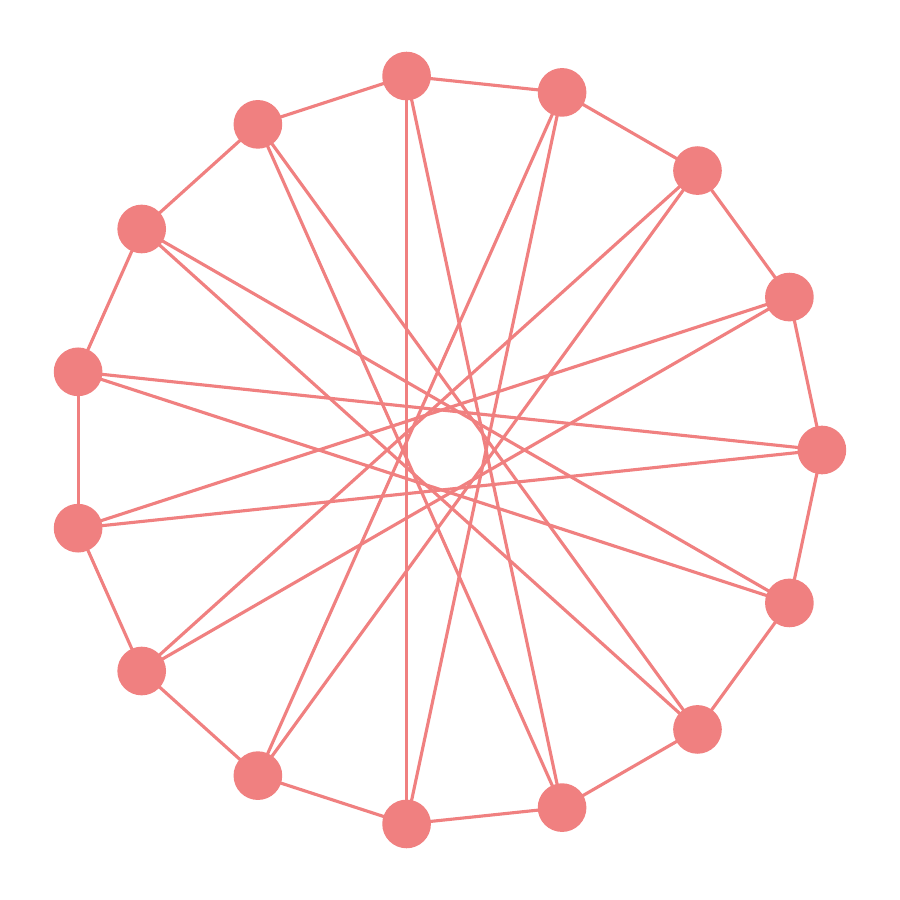}
        \vspace{-4mm}
        \caption{LRC}
        \label{fig:social_opt}
    \end{subfigure}
    \vspace{-2mm}
    \caption{\textbf{Social network structures:} Nodes represent agents, and edges denote pairs of agents that are co-trained. Network structures are abbreviated as follows: \textbf{FC – Fully Connected}, \textbf{Ring – Ring Structure}, \textbf{Clq – Ring with Cliques}, \textbf{WS – Watts-Strogatz}, and \textbf{LRC – Ring with Long-Range Connections}
    % \MM{this figure occupies too much space; and yet the content is not meaningful; we either move it to supplement; or make a nicer figure and merge it with Fig1; with each note indicating the villagers}
    }

    \label{fig:social_net}
    \vspace{-4mm}
\end{figure}

\begin{figure}[ht]
    \centering
    \begin{subfigure}[b]{0.32\textwidth}
        \centering
        \includegraphics[width=\textwidth]{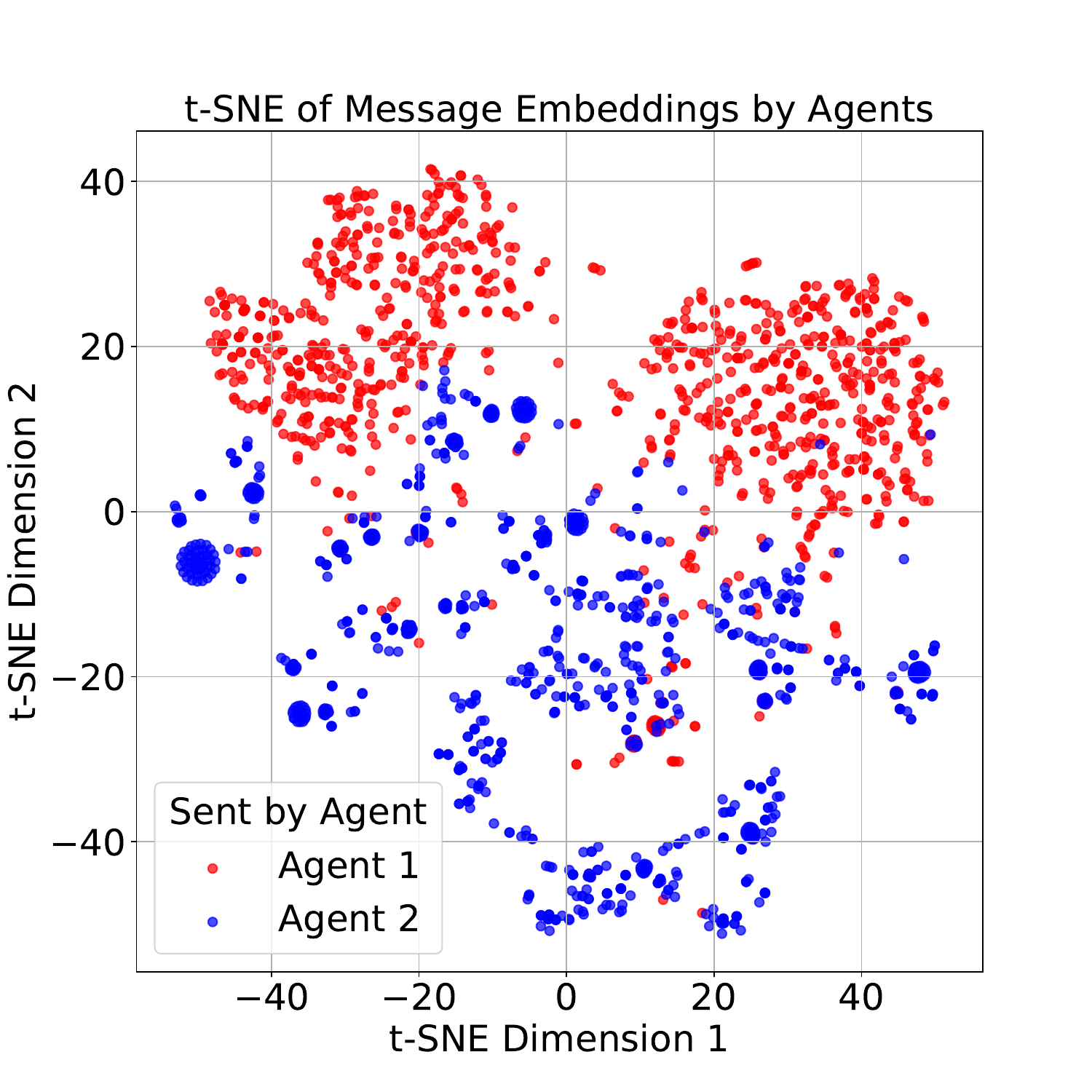}
        \caption{2 XP Agents}
        \label{fig:embedding_2xp}
    \end{subfigure}
    \hfill
    \begin{subfigure}[b]{0.32\textwidth}
        \centering
        \includegraphics[width=\textwidth]{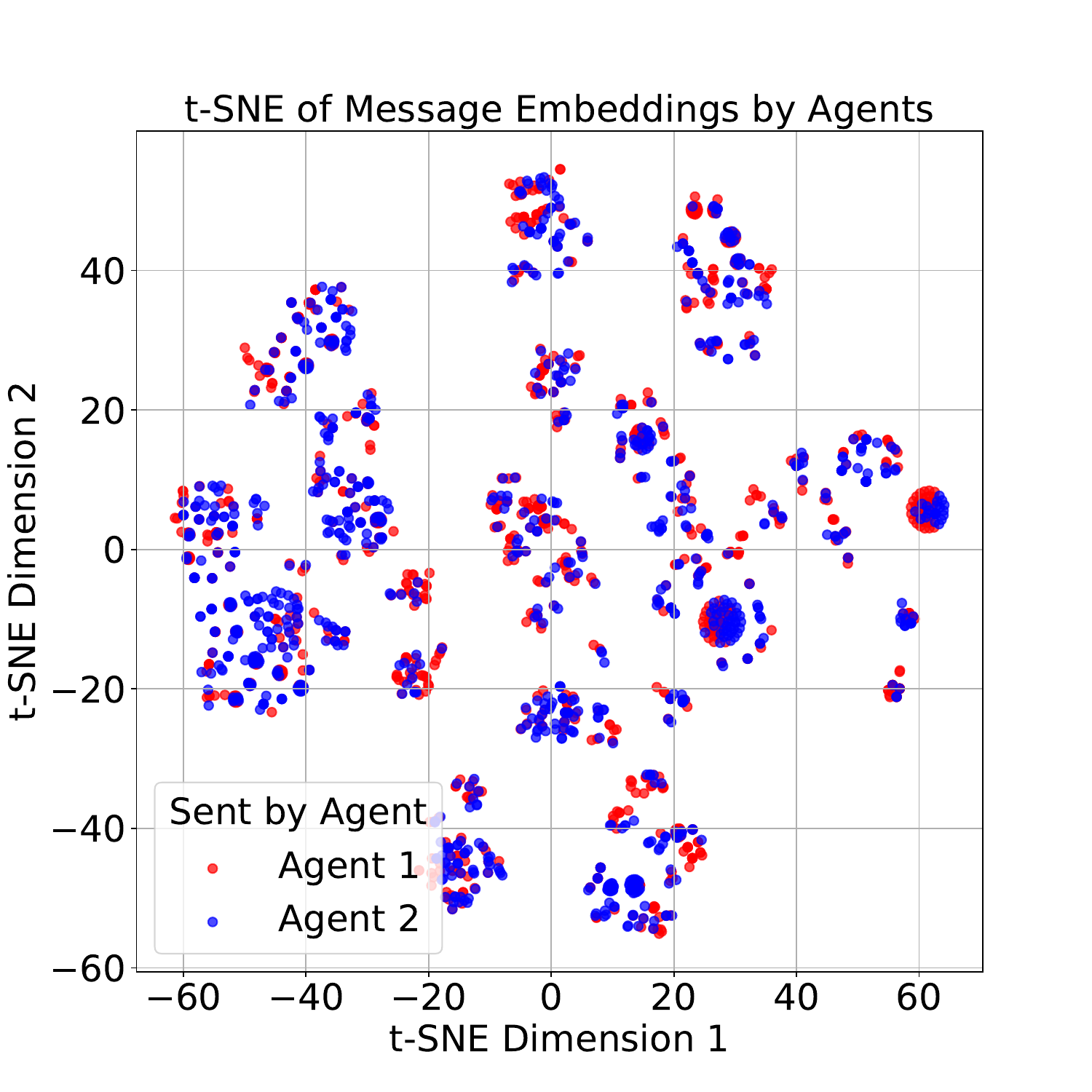}
        \caption{2 XP+SP Agents}
        \label{fig:embedding_2xpsp}
    \end{subfigure}
    \hfill
    \begin{subfigure}[b]{0.32\textwidth}
        \centering
        \includegraphics[width=\textwidth]{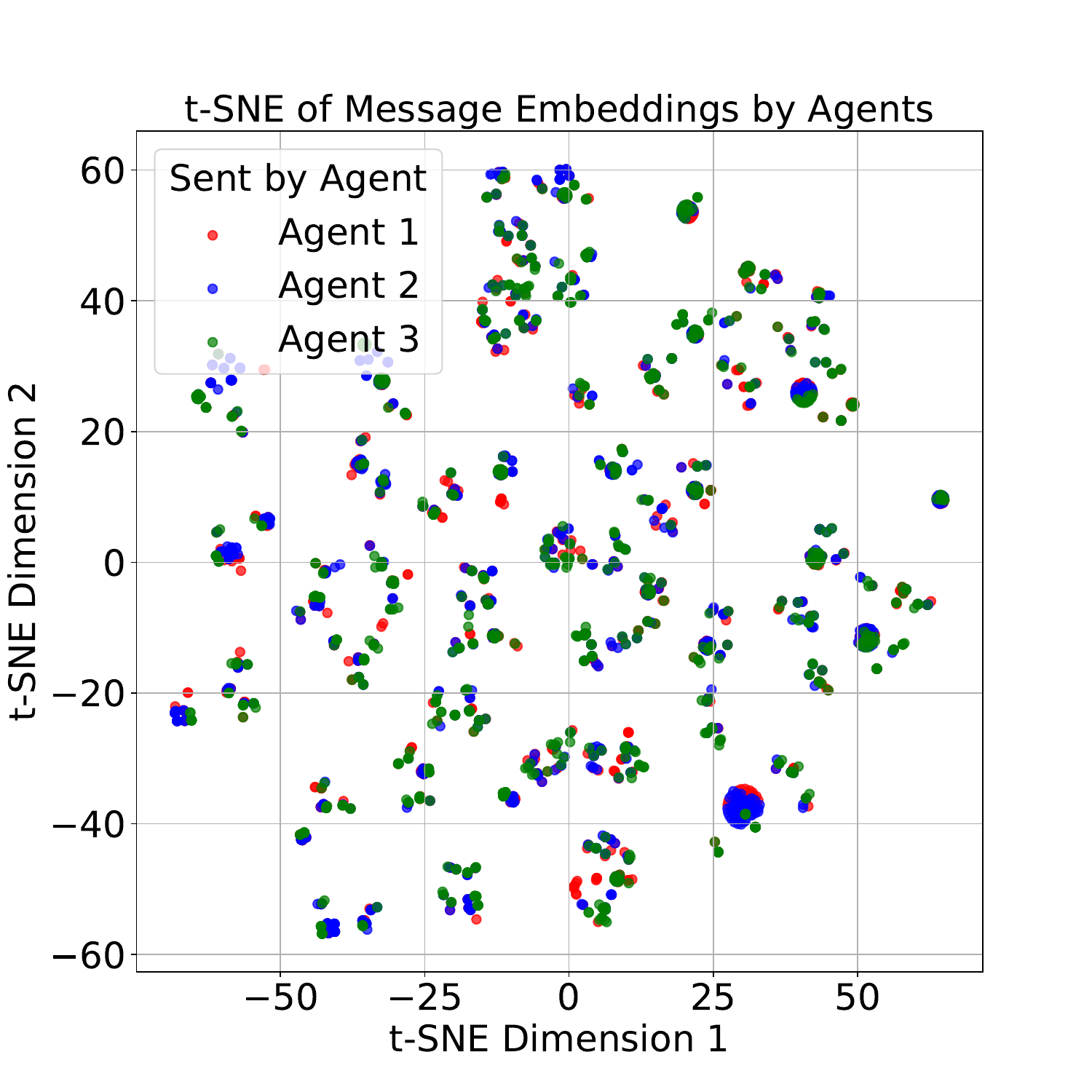}
        \caption{3 XP Agents}
        \label{fig:embedding_3xp}
    \end{subfigure}
    \caption{\textbf{Message embeddings under different training regimes.} In the t-SNE space of embedding vectors, sequences of messages from two agents are clearly separable in the 2-XP-Agents setting. (a) Messages from different agents occupy distinct regions of the embedding space (indicating different languages). (b-c) Messages from different agents largely overlap (indicating similar languages).}

    \label{fig:embedding_by_agents}
\end{figure}

\begin{figure}[ht]
    \centering
    \begin{subfigure}{0.30\textwidth}
        \centering
        \includegraphics[width=\textwidth]{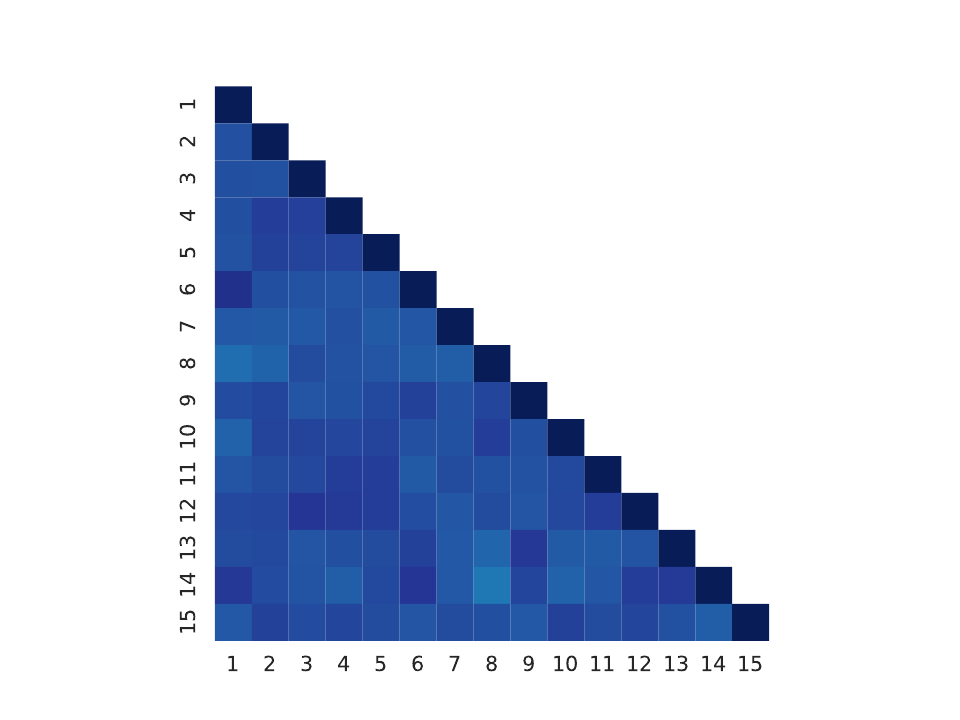}
        \caption{FC: XP}
        \label{fig:fc_xp_ls}
    \end{subfigure}
    \begin{subfigure}{0.30\textwidth}
        \centering
        \includegraphics[width=\textwidth]{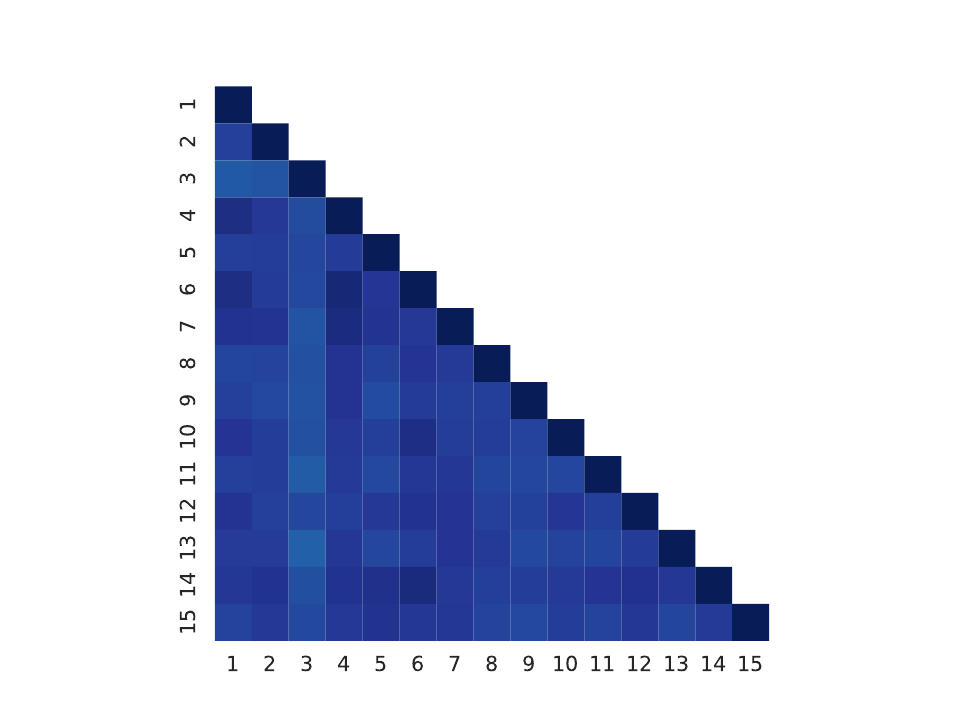}
        \caption{FC: XP+SP}
        \label{fig:fc_xpsp_ls}
    \end{subfigure}

    \vspace{0.5em}

    \begin{subfigure}{0.30\textwidth}
        \centering
        \includegraphics[width=\textwidth]{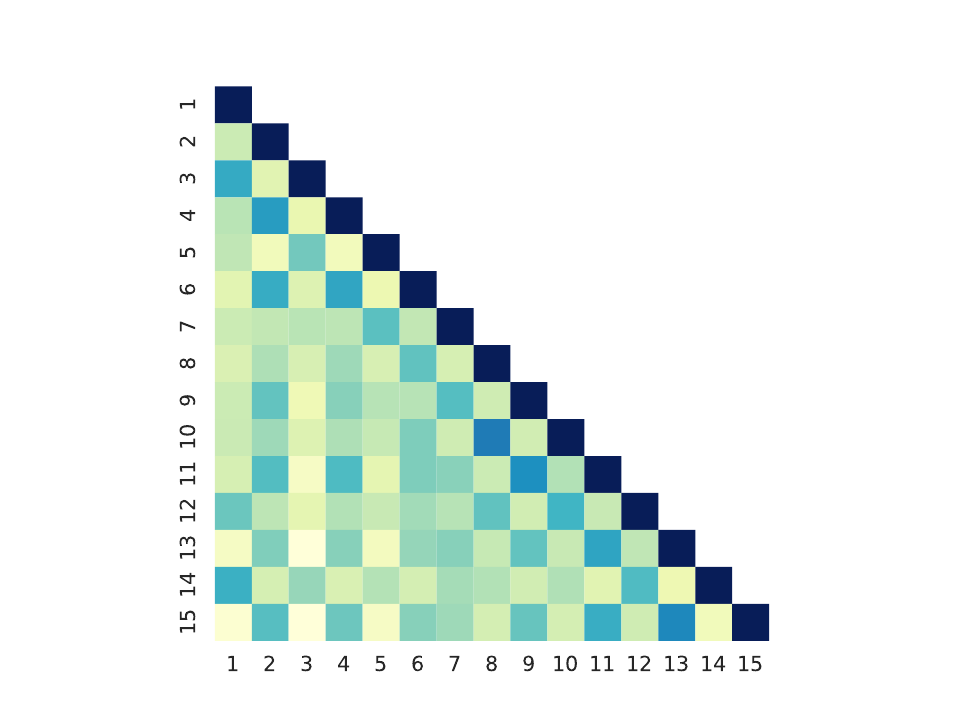}
        \caption{Ring: XP}
        \label{fig:ring_xp_ls}
    \end{subfigure}
    \begin{subfigure}{0.30\textwidth}
        \centering
        \includegraphics[width=\textwidth]{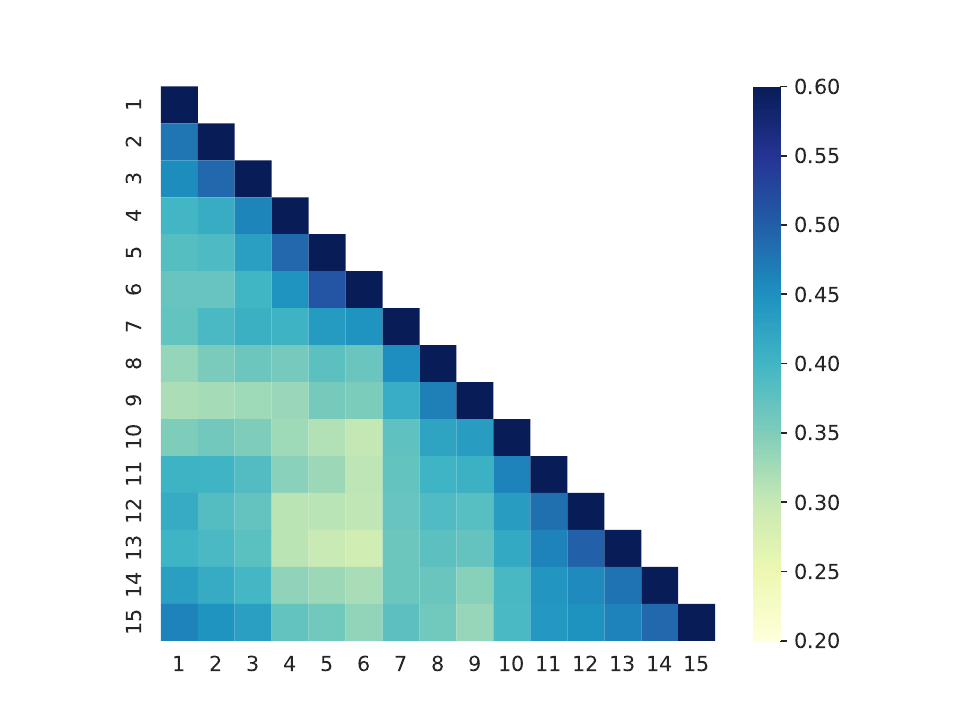}
        \caption{Ring: XP+SP}
        \label{fig:ring_xpsp_ls}
    \end{subfigure}
    
    \caption{\textbf{Language Similarity (LS) between agent pairs under different social structures and training strategies.} (a–b) Under the fully connected (FC) social network, both training regimes (XP and XP+SP) yield similar languages across agents. (c) With XP training in the Ring network, agents develop different languages from their partners (evident from the color switching between light and dark along the axis perpendicular to the diagonal). (d) With XP+SP training in the Ring network, agents and their partners converge to similar languages.
}
    \label{fig:ls_matrix}
\end{figure}

\begin{figure}[ht]
    \centering
    \begin{subfigure}{0.30\textwidth}
        \centering
        \includegraphics[trim=5 10 5 5, clip, width=\textwidth]{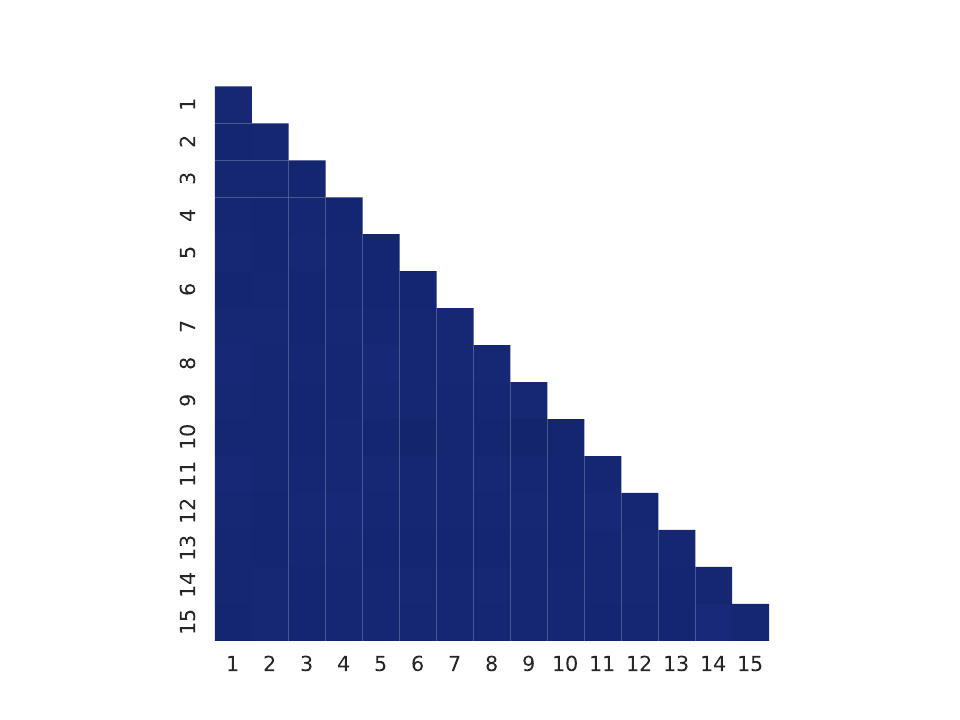}
        \caption{FC: XP}
        \label{fig:fc_xp_sr}
    \end{subfigure}
    \begin{subfigure}{0.30\textwidth}
        \centering
        \includegraphics[trim=5 10 5 5, clip, width=\textwidth]{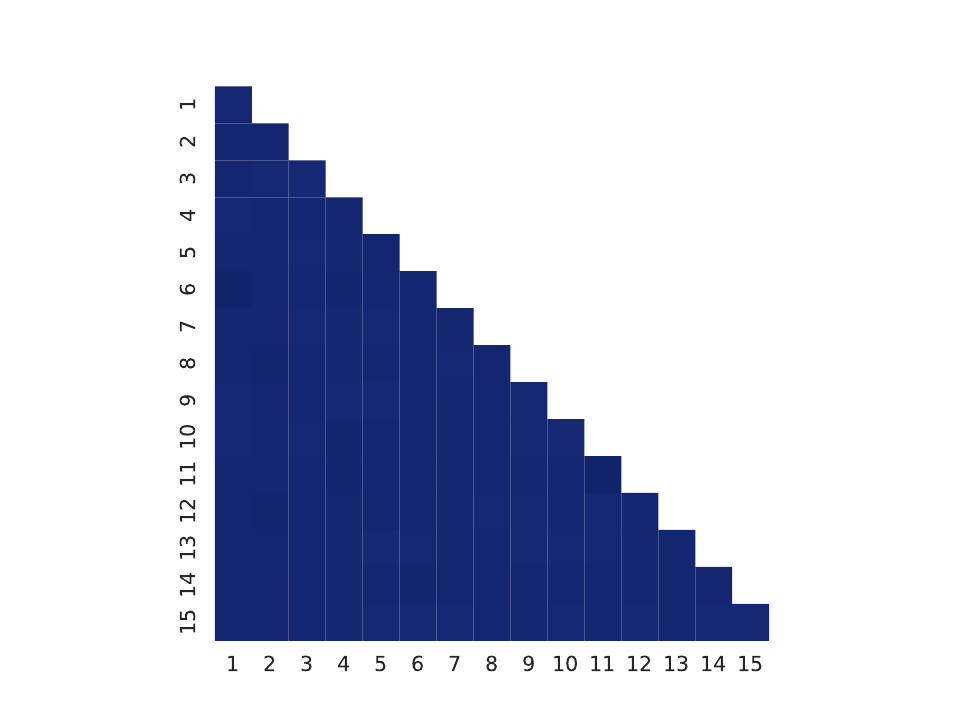}
        \caption{FC: XP+SP}
        \label{fig:fc_xpsp_sr}
    \end{subfigure}

    \vspace{0.5em}

    \begin{subfigure}{0.30\textwidth}
        \centering
        \includegraphics[trim=5 10 5 5, clip, width=\textwidth]{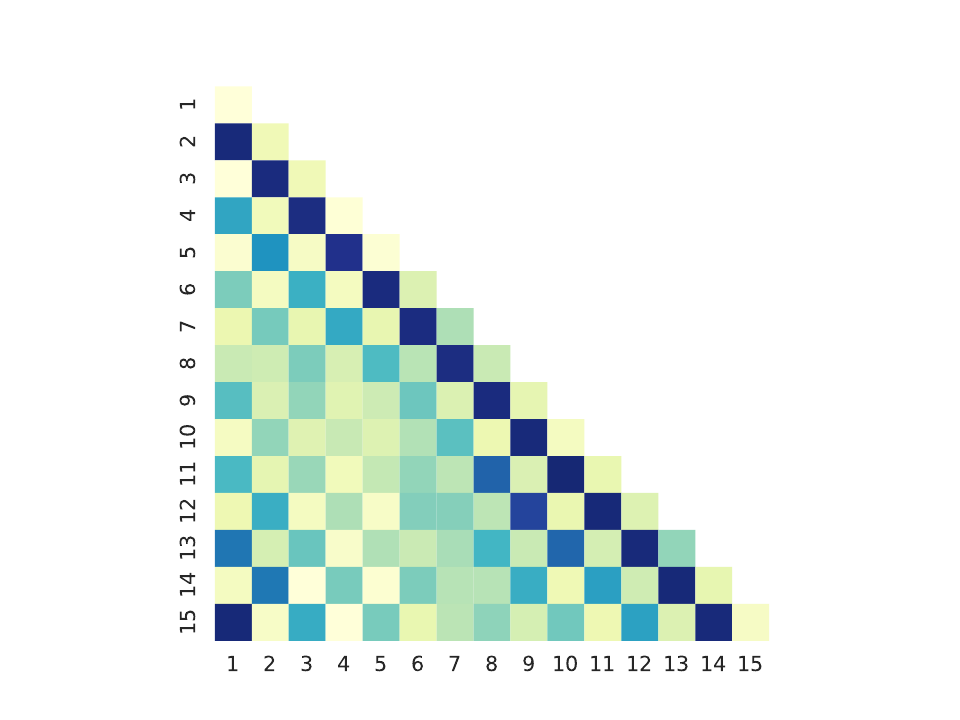}
        \caption{Ring: XP}
        \label{fig:ring_xp_sr}
    \end{subfigure}
    \begin{subfigure}{0.30\textwidth}
        \centering
        \includegraphics[trim=5 10 5 5, clip, width=\textwidth]{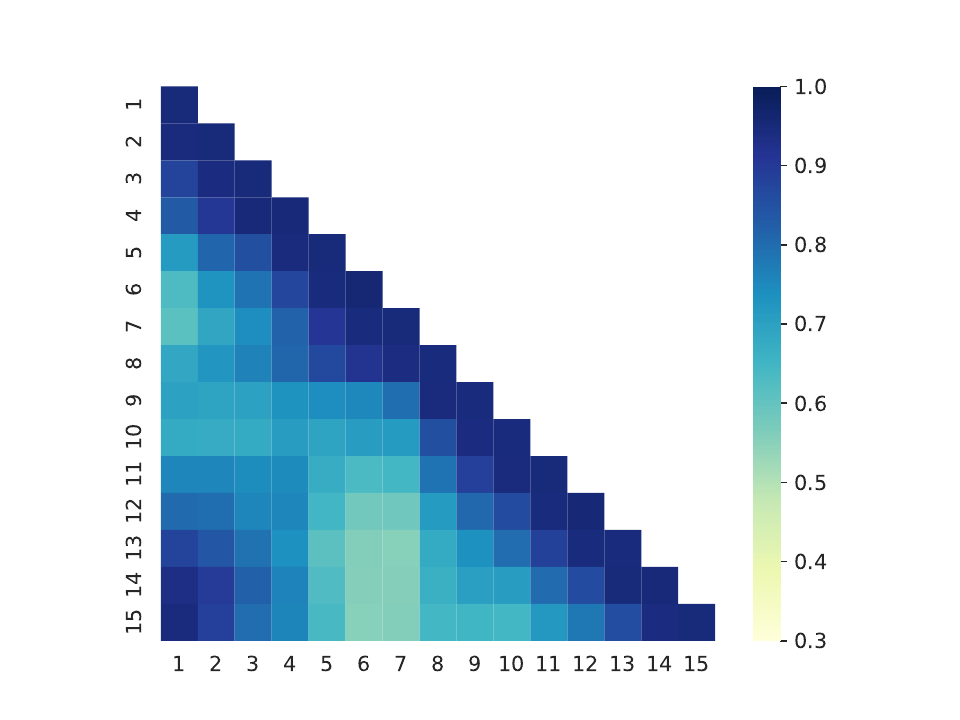}
        \caption{Ring: XP+SP}
        \label{fig:ring_xpsp_sr}
    \end{subfigure}
    
    \caption{\textbf{Success Rates (SR) between agent pairs under different social structures and training strategies.} (a–b) Both XP and XP+SP produce homogeneous SR across all pairs. (c) With XP training in the Ring network, agents fail when paired with copies of themselves. (d) With XP+SP training in the Ring network, agents succeed when paired with themselves.
}
    \label{fig:sr_matrix}
\end{figure}

\begin{figure}[ht]
    \centering
    \begin{subfigure}[b]{0.45\textwidth}
        \centering
        \includegraphics[trim=10 20 10 10, clip, width=\textwidth]{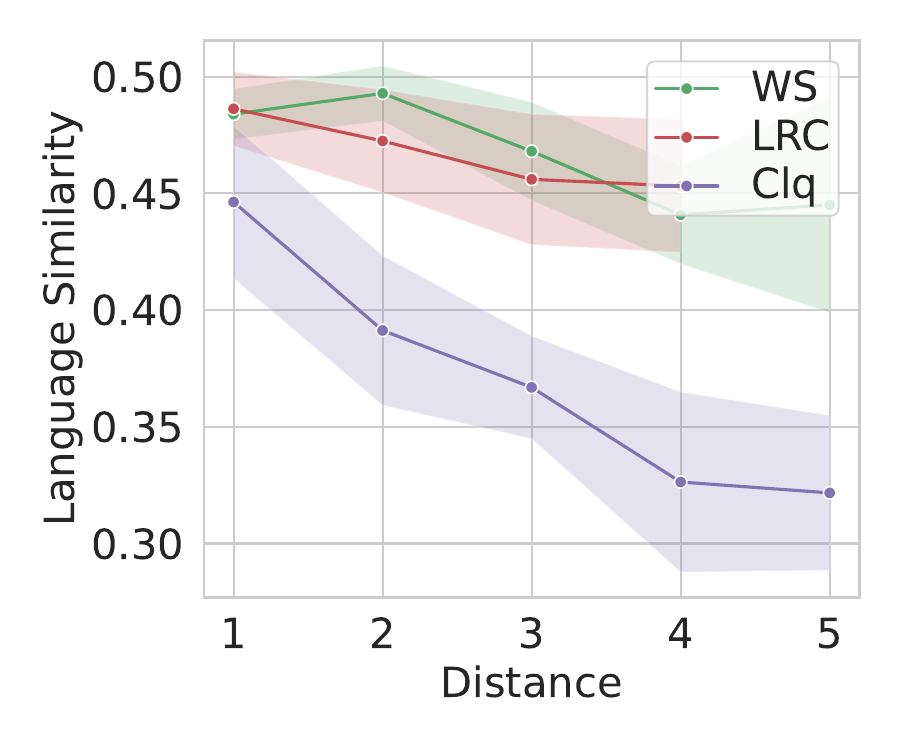}
        \caption{\emph{LS}}
        \label{fig:sm_ls_vs_dis}
    \end{subfigure}
    \begin{subfigure}[b]{0.45\textwidth}
        \centering
        \includegraphics[trim=10 20 10 10, clip, width=\textwidth]{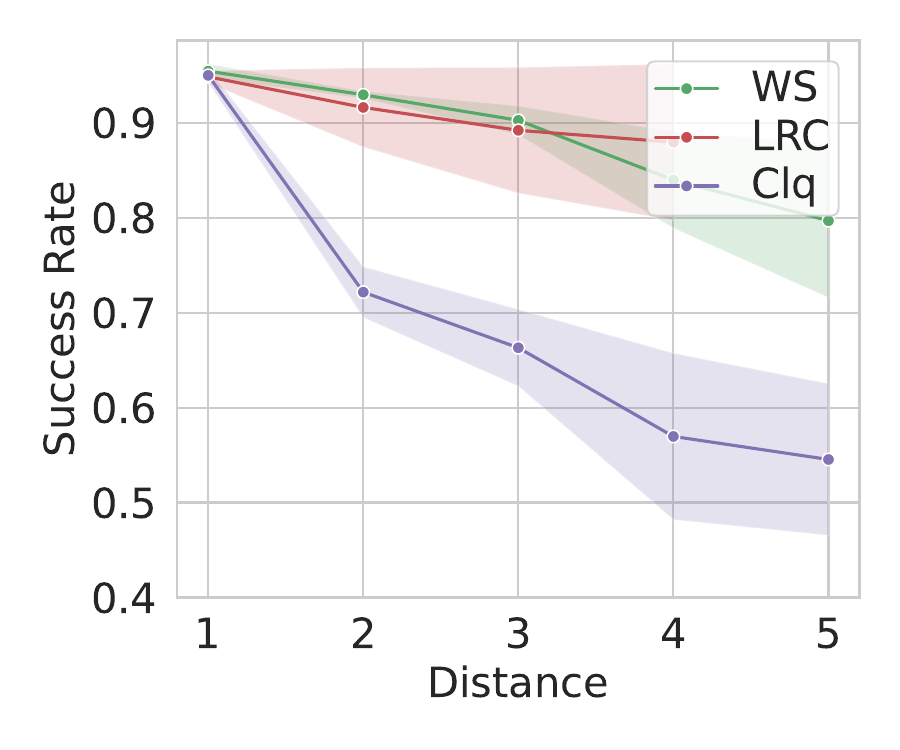}
        \caption{\emph{SR}}
        \label{fig:sm_sr_vs_dis}
    \end{subfigure}
    \vspace{-2mm}
    \caption{\textbf{More cliques and long-range connections both enhance cultural transmission ($N_{\text{pop}}=15$):} We plot \emph{LS} and \emph{SR} (See \autoref{appendix:metrics}) as a function of the shortest-path distance between two agents. A distance of 1 indicates that two agents were co-trained, while a distance greater than 1 indicates that the agents were never paired during training. Note that the maximum shortest path of LRC is only 4 while other cases are 5.
    %\MM{what happens to LRC? why stopping at distance of 4? I would plot everything up to 4 instead of 5.... trim x-axis}
    %\MT{Distance is calculated based on the shortest path. The maximum shortest path of LRC is only 4 while others' are 5.}
    }
    \label{fig:sm_ls_sr}
\end{figure}

We further examine whether adding additional connections, while keeping social networks sparse, can enhance cultural transmission. To this end, we employ two networks with small-world properties, as illustrated in \autoref{fig:social_net}. The first is the Watts-Strogatz (WS) network, constructed with parameters $k=4$ (neighbors) and $\beta=0.2$. The second is a ring structure with long-range connections (LRC) network, built upon a standard ring lattice with additional long-range edges. Specifically, the LRC network is generated by iteratively linking the farthest nodes on the ring lattice, serving as a simple proxy for reducing average path length and thereby producing a more small-world-like structure.

\textbf{More cliques and long-range connections both enhance cultural transmission [ScoreG-P15-XP]} 
We examine the impact of adding cliques or long-range connections to the ring-structured network on language transmission. Results are shown in \autoref{fig:sm_ls_vs_dis} and \autoref{fig:sm_sr_vs_dis}. For WS and LRC, both LS and SR decrease slightly with social distance, demonstrating successful language transmission to non-co-trained agents in the population. For Clq, LS at distance 1 is already lower and declines more steeply with distance than the others. SR also falls sharply, dropping from $0.95$ at distance $1$ to $0.55$ at distance $5$. This result is intuitive because Clq is constructed by adding only five connections to the ring network, which may not be enough for the population to form similar and compatible languages, whereas the others are constructed by adding many more connections to the ring network.

\section{ScoreG: Additional Results and Analyses}

\subsection{Ablation: Disembodiment (Referential Game)}
\label{appendix:disembodiment}

  %--- Left minipage: Message Analysis ---
  \begin{figure}[ht]
    \centering
    \includegraphics[trim=30 20 5 5, clip, width=0.3\textwidth]{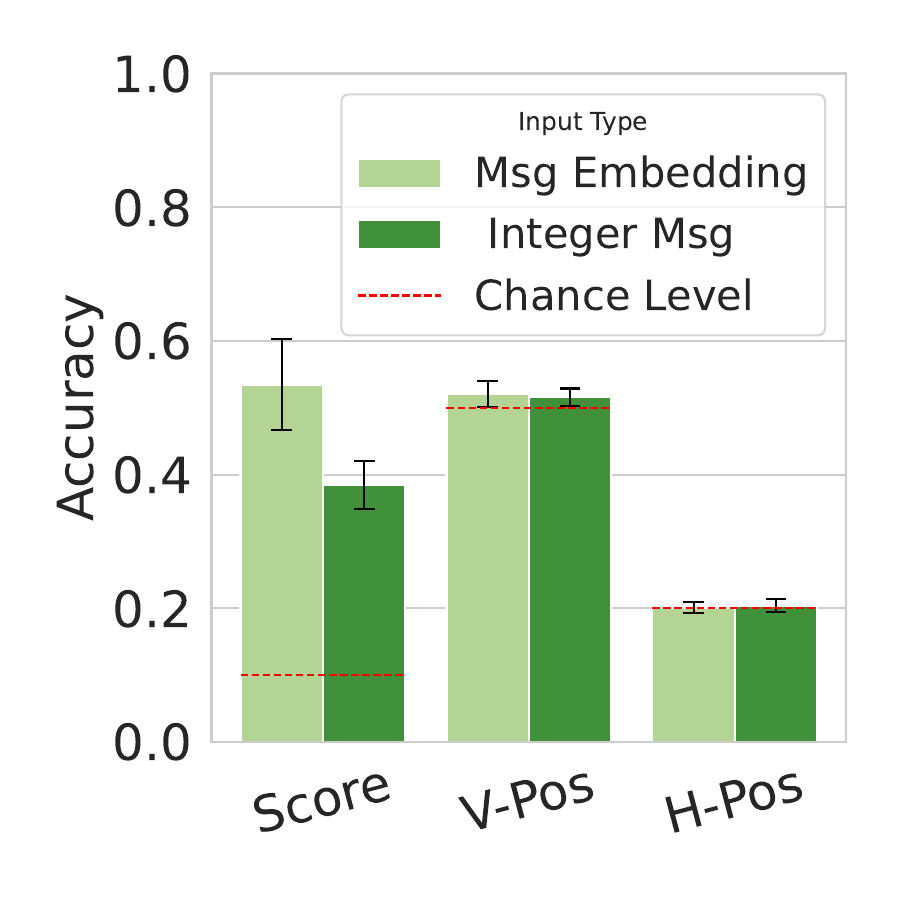}
    \vspace{-2mm}
    \caption{
      \textbf{Decoding item states from messages produced in a no-body referential game.}
      \emph{V-Pos}/\emph{H-Pos} are item positions, and \emph{Score} is item value. \emph{Integer Msg} uses raw message chains composed of integer sequences. \emph{Msg Embedding} uses chained embeddings mapped from a lookup table. Error bars show standard deviations.
    }
    \label{fig:decode_rg}
  \end{figure}

In this variant, agents are immobilized and remain fixed near their respective items. 
The game structure and goal are the same as those in \emph{ScoreG}, but without requiring agents to move to pick up the target item. Specifically, the action space is restricted to \{\textit{select my item}, \textit{select another item}, \textit{idle}\}. 
The game succeeds only if both agents correctly select the target item at the final timestep ($t=6$). 
Any premature selection causes immediate failure. 
Rewards are binary: $+1$ for success and $-1$ for failure, with no advantage for finishing early since the episode length is fixed. 

As shown in \autoref{fig:decode_rg}, agents exchange messages conveying \textit{item score} information, but not spatial information, a result confirmed by chance-level decoding accuracy. 
Each agent still observes its own static position, which in principle allows item locations to be inferred. 
However, this positional information is task-irrelevant, as agents cannot move.

\subsection{Ablation: Unidirectional Communication}
\label{appendix:unidirect}

We study a setting where only one agent (the sender) can transmit messages, while the other (the receiver) cannot send messages back—i.e., the sender receives no information from its partner. Two of the three agents are randomly selected and assigned as sender or receiver, and XP training is used. This setup reflects the unidirectionality typical of referential games.

Agents are evaluated under two conditions: \textit{Normal} (scores sampled from $[2,4,6,\dots,248]$) and \textit{Hard} (scores sampled from $[160,162,\dots,240]$). The result is shown in \autoref{tab:uni_com}. In the \textit{Normal} condition, agents achieve a high SR of $0.829$ because certain score configurations allow effective guessing. For instance, if the sender observes a low score of 32 and the receiver observes 232, the receiver infers that the sender sees 32 and can wait, while the sender correctly judges that its item is unlikely to be a target and searches for the other.

In the \textit{Hard} condition, however, SR drops to chance level ($0.508$). Because all scores are moderate, agents cannot reliably infer whether their item is a target. This result underscores the importance of bidirectional communication for achieving high SR in the ScoreG game.

\begin{table}[ht]
\centering
\caption{Performance of three XP agents with \textbf{unidirectional communication} under different test conditions.}
\label{tab:uni_com}
\begin{tabular}{lcc}
\toprule
Test Condition & SR & Length \\
\midrule
Normal & $0.829 \pm 0.022$ & $5.166 \pm 0.162$ \\
Hard   & $0.508 \pm 0.011$ & $5.934 \pm 0.503$ \\
\bottomrule
\end{tabular}
\end{table}

\subsection{Ablation: No Cooperation}
\label{appendix:no_cooperation}
\begin{table}[ht]
\centering
\caption{Ablation study on non-cooperative tasks (\emph{IndividualG}). 
We report average rewards for two agents and the average episode length 
(mean $\pm$ standard deviation) under three communication conditions.}
\label{tab:ablation_individualg}
\begin{tabular}{lccc}
\toprule
\textbf{Condition} & \textbf{Avg. Reward (First Player)} & \textbf{Avg. Reward (Second Player)} & \textbf{Avg. Length} \\
\midrule
\emph{Normal} & $1.770 \pm 0.180$ & $1.878 \pm 0.194$ & $13.762 \pm 1.240$ \\
\emph{Ablate-Noise} & $1.822 \pm 0.152$ & $1.910 \pm 0.171$ & $13.300 \pm 1.364$ \\
\emph{Ablate-Zero}   & $1.740 \pm 0.135$ & $1.945 \pm 0.179$ & $13.148 \pm 1.124$ \\
\bottomrule
\end{tabular}
\end{table}

A leading hypothesis suggests that language, or communication more broadly, emerges because agents need to share their intentions or knowledge to cooperate effectively. To test this hypothesis, we design an additional experiment in which agents are embodied in the same environment but pursue individual goals (i.e., each agent separately picks up any items in the map). In this setting, there is no mutual incentive for agents to pick up the same item—in other words, shared intentionality is removed. We use the same neural network architecture and training regime (XP) as in the \emph{ScoreG} experiments. Agents can still send and receive messages from their partners, but we hypothesize that communication will be less useful because coordination is not required to achieve shared rewards. We describe this environment, called \emph{IndividualG}, below.

\paragraph{IndividualG} The game structure is similar to \emph{ScoreG}, with agents placed in a $6 \times 6$ grid world and sharing the same action space. However, there are several key differences. First, no scores are assigned to items, and each item requires only a single agent to pick it up. Consequently, no shared reward is available to cooperative agents, and communication is unnecessary to succeed. Second, an agent receives a reward of +1 for each item it successfully collects. Items cannot be picked up by two agents simultaneously. There are four items in total, so the maximum reward for a single agent is 4. The Nash equilibrium is for each agent to pick up two items. The game ends after 20 time steps or once all items have been collected.

\paragraph{Results} The goal of this experiment is to test whether communication is useful in non-cooperative tasks. In other words, can meaningful communication still emerge when cooperation is unnecessary? We train three XP agents and evaluate them under three conditions: \emph{Normal}, \emph{Ablate-Zero}, and \emph{Ablate-Noise}. In \emph{Ablate-Zero}, agents’ messages are replaced with zero tokens, preventing the transmission of meaningful signals. In \emph{Ablate-Noise}, agents’ messages are replaced with random samples drawn from a uniform distribution instead of the policy’s output distribution. This setup forces agents to misinterpret their partners’ messages if the communication learned during training had any effect. In each episode, two of the three agents are randomly assigned as the \textbf{First Player} and \textbf{Second Player}. We report the average reward (corresponding to the average number of items collected per episode) for both players to test for potential asymmetries in their behavior. Results are shown in \autoref{tab:ablation_individualg}. Across all three conditions, agents achieve similar performance, with no observable difference between the first and second players. This suggests that communication does not provide a measurable advantage when agents pursue independent rewards. In other words, without shared goals or incentives for coordination, the environment does not pressure agents to develop or rely on meaningful communication.

\subsection{Analysis: Varying Vocabulary Size}
In \emph{ScoreG}, the agents typically complete the game in approximately 5-6 time steps because we introduce time pressure to encourage agents to finish the game as soon as possible. Varying the vocabulary size alters the capacity of information that can be conveyed through the message sequence. Here we train XP agents with $N_\text{pop}=3$ with different vocabulary sizes. The default vocabulary size we used in the main text was 4 (i.e., $\mathbf{m_t} \in \{0,1,2,3\}$). A smaller vocabulary should, in principle, encourage greater compositionality due to a smaller capacity. As shown in \autoref{tab:vocab_comparison}, compositionality remains relatively stable across vocabulary sizes of $4$, $8$, and $16$. However, when the vocabulary size increases to $32$, the \emph{topsim} score drops to $0.25$, indicating a notable decline in compositional structure.
\begin{table}[ht]
\centering
\caption{\textbf{Ablation Study on Vocabulary Size.} Effect of vocabulary size on communication and performance metrics. Values are reported as mean ± standard deviation.\\}
\label{tab:vocab_comparison}
\begin{tabular}{ccccc}
\toprule
\textbf{Vocabulary Size} & \textbf{topsim} & \textbf{IC} & \textbf{Self-SR} & \textbf{Cross-SR} \\
\midrule
4  & $0.311 \pm 0.111$ & $0.966 \pm 0.017$ & $0.939 \pm 0.021$ & $0.972 \pm 0.004$ \\
8  & $0.352 \pm 0.032$ & $0.902 \pm 0.056$ & $0.880 \pm 0.057$ & $0.975 \pm 0.003$ \\
16 & $0.309 \pm 0.052$ & $0.954 \pm 0.030$ & $0.933 \pm 0.030$ & $0.978 \pm 0.001$ \\
32 & $0.251 \pm 0.034$ & $0.954 \pm 0.031$ & $0.931 \pm 0.031$ & $0.977 \pm 0.001$ \\
\bottomrule
\end{tabular}
\end{table}

\subsection{Generalization to Unseen Positions}
\label{appendix:gen_unseen_pos}

We study how well communication generalizes when it comes to unseen object positions. We trained three XP agents to perform the ScoreG task on 17 food locations in a 5×5 grid and evaluated their generalization to 8 unseen locations. Train and test food locations are randomly selected. The results are shown in \autoref{tab:generalize_loc}. While agents with communication perform well with SR of $0.953$ on seen locations, SR drops to $0.756$ when agents are tasked to pick up objects at the unseen locations.
We further ask whether the observed generalization gap arises from communication itself or from limitations of the underlying RL method. To test this, we train three XP agents that can observe all scores and each other but cannot communicate—essentially a standard MARL setup. The agents are trained on 17 item locations and evaluated on 8 unseen locations. As shown in \autoref{tab:generalize_loc_no_comm}, SR drops from $0.981$ to $0.823$, indicating that the gap stems from the inherent limits of the MARL algorithm rather than communication alone. Developing novel methods to improve the generalization of multi-agent coordination under unseen spatial configurations is an important direction for both MARL in general and emergent communication in embodied settings. However, this lies beyond the scope of the present study and we leave it for future work.

\begin{table}[ht]
\centering
\caption{Performance of three XP agents with communication on seen vs unseen locations.}
\begin{tabular}{ccc}
\toprule
{Location} & {SR} & {Length} \\
\midrule
Seen    & $0.953 \pm 0.017$ & $5.197 \pm 0.099$ \\
Unseen  & $0.756 \pm 0.022$ & $5.154 \pm 0.229$ \\
\bottomrule
\end{tabular}
\label{tab:generalize_loc}
\end{table}

\begin{table}[ht]
\centering
\caption{Performance of three XP agents without communication on seen vs unseen locations.}
\label{tab:seen_unseen}
\begin{tabular}{ccc}
\toprule
Location & SR & Length \\
\midrule
Seen   & $0.981 \pm 0.006$ & $4.389 \pm 0.038$ \\
Unseen & $0.823 \pm 0.023$ & $4.092 \pm 0.062$ \\
\bottomrule
\end{tabular}
\label{tab:generalize_loc_no_comm}
\end{table}

\subsection{Analysis of Below-Chance Success Rates in Implicit Communication}
\label{appendix:implicit-sr}

In \emph{Inv-NoCom} setting, the success rate (SR) falls below the chance level (See \autoref{fig:ablation_sr}), 
which indicates a systematic coordination failure rather than random performance. 
To better understand this phenomenon, we conduct two analyses: (i) ablation of the 
speed reward, and (ii) categorization of failure outcomes.

\paragraph{Ablation of the speed reward.}  
The speed reward encourages efficient task completion. Removing this reward does not 
substantially change SR, which remains below chance. However, the absence of the speed 
reward increases the average episode length (Table~\ref{tab:speed-reward}). 
This result shows that the speed reward primarily affects efficiency but does not resolve 
the core coordination issue that leads to low SR.

\begin{table}[ht]
\centering
\caption{Effect of the speed reward on success rate (SR) and average episode length.}
\label{tab:speed-reward}
\begin{tabular}{lcc}
\toprule
Reward Condition & SR & Length \\
\midrule
With speed reward    & 0.438 & 5.607 \\
Without speed reward & 0.446 & 7.950 \\
\bottomrule
\end{tabular}
\end{table}

\paragraph{Failure outcome statistics.}  
To diagnose why agents fail, we categorize episode outcomes into three groups: 
\emph{correct pickups}, \emph{wrong pickups}, and \emph{no pickups}. 
Table~\ref{tab:failure-categories} shows the distribution. Failures are dominated 
by wrong pickups (34.8\%), followed by cases where agents never pick up the same item (27.4\%). 
Together, these account for the below-chance success rate and show that agents struggle 
to coordinate reliably without communication.

\begin{table}[ht]
\centering
\caption{Distribution of episode outcomes without the speed reward.}
\label{tab:failure-categories}
\begin{tabular}{lc}
\toprule
Outcome & Percentage \\
\midrule
Success: Correct Pickup & 37.8\% \\
Failed: Wrong Pickup    & 34.8\% \\
Failed: No Pickup       & 27.4\% \\
\bottomrule
\end{tabular}
\end{table}

Overall, these findings show that below-chance SR in \emph{Inv-NoCom} setting does not result from the speed reward but from miscoordination, 
most prominently through wrong item pickups.

\subsection{ScoreG: Bumping and Cell Occupancy}
\label{appendix:scoreg_bumping}

Each grid cell can be occupied by at most one entity 
(either an agent or an item) at any given time. If an agent attempts to move into a cell 
that is already occupied by another agent, the move is blocked and the agent remains in its 
original position. We refer to such blocked moves as \emph{bumping events}.

Although bumping could theoretically provide a minimal form of implicit communication 
(e.g., signaling another agent’s location), it constitutes only a very low-bandwidth channel and becomes difficult to coordinate once agents are spatially separated. In practice, bumping is rare and therefore does not play a substantive role in coordination or communication. \autoref{tab:bumps} summarizes the average number of bumps observed per episode across different conditions.

\begin{table}[ht]
\centering
\caption{Average bumps per episode.}
\label{tab:bumps}
\begin{tabular}{lcc}
\toprule
Condition & Success & Failed \\
\midrule
Inv-Com   & 0.179   & 0.065  \\
Inv-NoCom & 0.043   & 0.057  \\
Vis-NoCom & 0.015   & 0.020  \\
\bottomrule
\end{tabular}
\end{table}

These results confirm that while bumping exists as an artifact of the grid-world dynamics, it is too infrequent ($<1$ bump per episode) to support reliable strategies or emergent communication.

\section{TemporalG: Additional Results and Analyses}

\begin{table}[ht]
\centering
\small
\caption{
Performance of XP agents in the $5 \times 5$ \emph{TemporalG} environment with the fixed duration of 6. 
$N_{\text{pop}}$: population size. LS: language similarity; Cross-SR: success rate with other agents; Self-SR: success rate in self-play. 
All values are reported as mean ± standard deviation across held-out agent pairs.
}
\begin{tabular}{lcccc}
\toprule
\textbf{Model} & \textbf{$N_{\text{pop}}$} & \textbf{LS} & \textbf{Cross-SR} & \textbf{Self-SR} \\
\midrule
XP & 3 & $0.336 \pm 0.012$ & $0.975 \pm 0.003$ & $0.962 \pm 0.006$ \\
\bottomrule
\end{tabular}

\label{tab:temporal_order_performance}
\end{table}

\subsection{Implicit Communication}
\label{appendix:implicit_temporalg}
As shown in \autoref{tab:tempg_8x8_ablate}, three XP agents trained without communication under the invisible-partner condition (\emph{Inv-NoCom}) still achieved relatively high success at a fixed duration of 6, likely by adopting a waiting strategy—that is, delaying movement to infer the spawn order. Their longer episode lengths, compared to the other two conditions (\emph{Inv-Com} and \emph{Vis-NoCom}), support this interpretation. However, when the fixed duration was increased, the number of possible spawn times doubled (from 6 to 12), rendering the waiting strategy ineffective and causing the success rate of \emph{Inv-NoCom} agents to fall below chance level. Similar to what was observed in \emph{ScoreG} experiment in \autoref{sec:results}, \emph{Vis-NoCom} agents achieved performance comparable to \emph{Inv-Com}, suggesting that they exploited visual cues for implicit coordination, especially under the limited set of six possible spawn times. In the \emph{Inv-Com-Noise} condition, agents were trained with explicit communication but tested without it (messages were always zero). The low SR observed in this condition implies that meaningful messages are necessary for agents trained with communication (\emph{Inv-Com}) to succeed at the task.

\begin{table}[ht]
\centering
\caption{Implicit communication in the
$8 \times 8$ \emph{TemporalG} with fixed durations of 6 and 12. Length denotes the average length of successful episodes (not applicable in the bottom-right cell, where some runs yielded zero successes).
Abbreviations are the same as those reported in \autoref{fig:ablation_all}.}
\label{tab:tempg_8x8_ablate}
\begin{tabular}{lcccc}
\toprule
\multirow{2}{*}{Condition} & \multicolumn{2}{c}{Fixed Duration = 6} & \multicolumn{2}{c}{Fixed Duration = 12} \\
\cmidrule(lr){2-3} \cmidrule(lr){4-5}
 & SR & Length & SR & Length \\
\midrule
\emph{Inv-Com}       & $0.980 \pm 0.035$ & $23.990 \pm 0.618$ & $0.961 \pm 0.021$ & $30.023 \pm 0.775$ \\
\emph{Inv-NoCom}     & $0.728 \pm 0.155$ & $28.716 \pm 2.341$ & $0.419 \pm 0.137$ & $31.738 \pm 1.780$ \\
\emph{Vis-NoCom}     & $0.939 \pm 0.093$ & $24.516 \pm 0.307$ & $0.862 \pm 0.110$ & $30.546 \pm 0.406$ \\
\emph{Inv-Com-Noise} & $0.313 \pm 0.040$ & $28.666 \pm 0.774$ & $0.003 \pm 0.001$ & --- \\
\bottomrule
\end{tabular}
\end{table}

\newpage

\subsection{Rendezvous Behavior Analysis}
\label{appendix:tempg_rendezvous}
\begin{figure}[ht]
    \centering
    \includegraphics[width=0.75\textwidth]{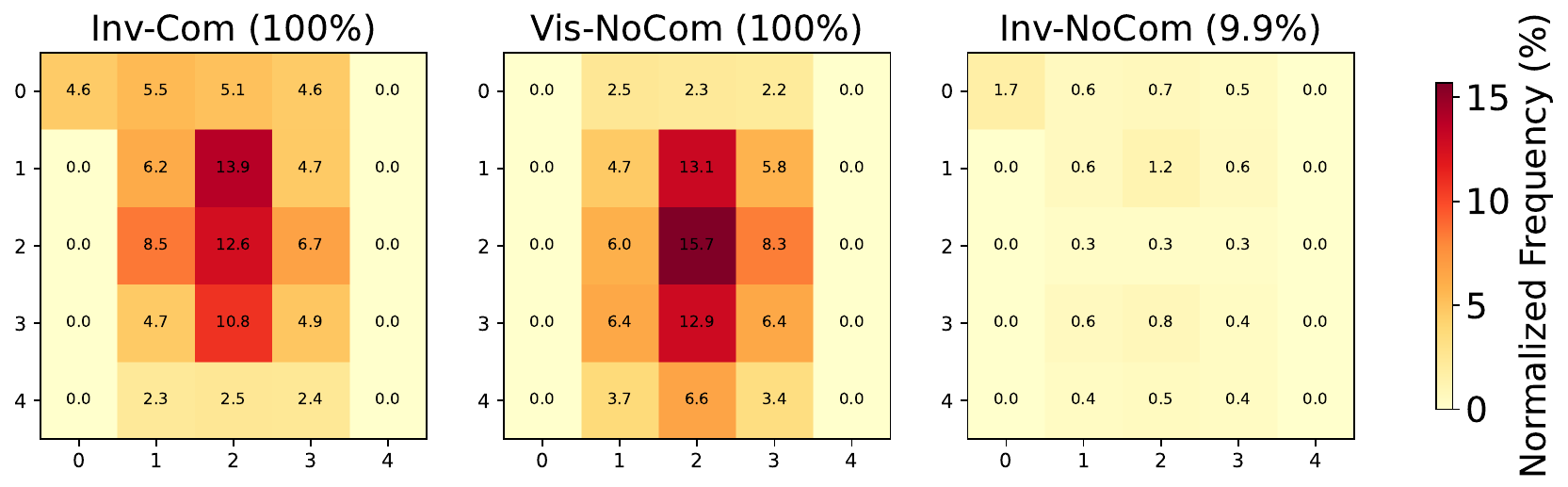}
    \caption{Rendezvous maps of first meeting points across conditions. 
    Values represent normalized frequencies of encounters on the $5 \times 5$ grid. 
    \emph{Inv-Com} and \emph{Vis-NoCom} agents consistently meet and converge near the center, 
    while \emph{Inv-NoCom} agents rarely meet and show no strong central bias.}
    \label{fig:rendezvous}
\end{figure}

Restricting the communication range forces agents to meet in physical proximity before they can exchange information. This environmental constraint gives rise to rendezvous points: locations where agents repeatedly converge to initiate communication. Despite this restriction, displacement remains necessary. Agents must explore the grid to observe items, store these observations in memory, and then communicate the information once they encounter their partner. Since neither agent can directly infer when all items have appeared without exchanging messages, coordination still relies on communication.

To analyze where meetings occur, we compute a rendezvous map that records the normalized frequency of the first meeting location across episodes. The resulting heatmaps reveal a pronounced central bias: agents overwhelmingly converge near the middle of the $5 \times 5$ grid (See \autoref{fig:rendezvous}). This bias emerges because the center minimizes travel distance and maximizes the probability of encounter under partial observability.

When comparing conditions, clear differences emerge. \emph{Inv-Com} agents always meet (100\% of episodes) and almost exclusively converge in the central cells, reflecting highly coordinated rendezvous behavior. \emph{Vis-NoCom} agents also meet in every episode, showing a similar central bias, suggesting that visibility supports coordination even without explicit messages. By contrast, \emph{Inv-NoCom} agents succeed in meeting in only 9.9\% of episodes, and when they do meet, the rendezvous points are weakly defined and scattered across the grid.

\end{document}